\newtheorem{theorem}{Theorem}[section]
\newtheorem{proposition}[theorem]{Proposition}
\newtheorem{definition}[theorem]{Definition}
\theoremstyle{definition}
\newtheorem{example}[subsection]{Example}
\renewenvironment{proof}{\paragraph{Proof:}}{\hfill$\square$}
\def \figwidthbase{\columnwidth}
\def \fullsize{0.75}
\def \halfsize{0.37}
\newif\ifcompileapp
\newtheorem{result}{Result}
\renewcommand{\vec}{\mathbf}
\newcommand{\uf}{\vec{p}}
\newcommand{\itf}{\vec{q}}
\newcommand{\calP}{\mathcal{P}}
\newcommand{\calM}{\mathcal{M}}
\renewcommand{\r}{\vec{r}}
\newcommand{\comp}{\mathsf{c}}
\newcommand{\proj}{\Pi}
\newcommand{\nusers}{n}
\newcommand{\nitems}{m}
\newcommand{\R}{\mathbb{R}}
\newcommand{\calR}{\mathcal{R}}
\DeclareMathOperator*{\minimize}{minimize}
\DeclareMathOperator*{\maxn}{max^{(N)}}
\DeclareMathOperator*{\maxnp}{max^{(N')}}
\begin{document}

\title{Recommendations and User Agency: The Reachability of Collaboratively-Filtered Information}

\author{Sarah Dean${}^1$, Sarah Rich${}^2$, Benjamin Recht${}^1$\\
${}^1$Department of EECS,  University of California, Berkeley\\
${}^2$Canopy Crest}

\date{December 2019}

\maketitle
\vspace{-0.3in}

\begin{abstract}

Recommender systems often rely on models which are trained to maximize accuracy in predicting user preferences.
When the systems are deployed, these models determine the availability of content and information to different users. 
The gap between these objectives gives rise to a potential for unintended consequences, contributing to phenomena such as filter bubbles and polarization.
In this work, we consider directly the information availability problem through the lens of user recourse.
Using ideas of reachability, we propose a computationally efficient audit for top-$N$ linear recommender models.
Furthermore, we describe the relationship between model complexity and the effort necessary for users to exert control over their recommendations.
We use this insight to provide a novel perspective on the user cold-start problem.
Finally, we demonstrate these concepts with an empirical investigation of a state-of-the-art model trained on a widely used movie ratings dataset. \end{abstract}

\section{Introduction}

Recommendation systems influence the way information is presented to individuals for a wide variety of domains including music, videos, dating, shopping, and advertising.
On one hand, the near-ubiquitous practice of filtering content by predicted preferences makes the digital information overload possible for individuals to navigate.
By exploiting the patterns in ratings or consumption across users, preference predictions are useful in surfacing relevant and interesting content.
On the other hand, this personalized curation is a potential mechanism for social segmentation and polarization.
The exploited patterns across users may in fact encode undesirable biases which become self-reinforcing when used in feedback to make recommendations.

Recent empirical work shows that personalization on the Internet has a limited effect on political polarization~\cite{flaxman2016filter}, and in fact it can increase the diversity of content consumed by individuals~\cite{nguyen2014exploring}.
However, these observations follow by comparison to non-personalized baselines of cable news or well known publishers.
In a digital world where all content is algorithmically sorted by default, how do we articulate the tradeoffs involved?
In the past year, YouTube has come under fire for promoting disturbing children's content and working as an engine of radicalization~\cite{zufecki,wsjyoutube,bridle}.
This comes a push on algorithm development towards reaching 1 billion hours of watchtime per day; over 70\% of views now come from the recommended videos~\cite{cnetYoutube219}.

The Youtube controversy is an illustrative example of potential pitfalls when putting large scale machine learning-based systems in feedback with people,
and highlights the importance of creating analytical tools to anticipate and prevent undesirable behavior. 
Such tools should seek to quantify the degree to which a recommender system will meet the information needs of its users or of society as a whole, where these ``information needs'' must be carefully defined to include notions like relevance, coverage, and diversity.
An important approach involves the empirical evaluation of these metrics by simulating recommendations made by models once they are trained~\cite{ekstrand2018all}.
In this work we develop a complementary approach which differs in two major ways: 
First, we directly analyze the predictive model, making it possible to understand underlying mechanisms.
Second, our evaluation considers a range of possible user behaviors rather than a static snapshot.

Drawing conclusions about the likely effects of recommendations involves treating humans as a component within the system, and the validity of these conclusions hinges on modeling human behavior.
We propose an alternative evaluation that favors the agency of individuals over the limited perspective offered by behavioral predictions.
Our main focus is on questions of \emph{possibility}: to what extent can someone be pigeonholed by their viewing history? What videos may they never see, even after a drastic change in viewing behavior? And how might a recommender system encode biases in a way that effectively limits the available library of content?

This perspective brings user agency into the center, prioritizing the  the ability for models to be as adaptable as they are accurate, able to accommodate arbitrary changes in the interests of individuals.
Studies find positive effects of allowing users to exert greater control in recommendation systems~\cite{yang2019intention,harper2015putting}.
While there are many system-level or post-hoc approaches to incorporating user feedback, we focus directly on the machine learning model that powers recommendations.

\paragraph{Contributions}
In this paper, we propose a definition of \emph{user recourse} and \emph{item availability} for recommender systems. 
This perspective extends the notion of recourse proposed by~\citet{ustun2019actionable} to multiclass classification settings and specializes to concerns most relevant for information retrieval systems.
We focus our analysis on top-$N$ recommendations made using linear predictions, a broad class including matrix factorization models.
In Section~\ref{sec:recourse_and_availability} we show how properties of latent user and item representations interact to limit or ensure recourse and availability.
This yields a novel perspective on user cold-start problems, where a user with no rating history is introduced to a system.
In Section~\ref{sec:item_results}, we propose a computationally efficient model audit.
Finally, in Section~\ref{sec:experiments}, we demonstrate how the proposed analysis can be used as a tool to interpret how learned models will interact with users when deployed.

\subsection{Related Work}

Recommendation models that incorporate user feedback for online updates have been considered from several different angles. 
The computational perspective focuses on ensuring that model updates are efficient and fast~\cite{he2016fast}. 
The statistical perspective articulates the sampling bias induced by recommendation~\cite{bonner2018causal}, while
practical perspectives identify ways to discard user interactions that are not informative for model updates~\cite{burashnikova2019sequential}.
Another body of work focuses on the learning problem, seeking to improve the predictive accuracy of models by exploiting the sequential nature of information.
This includes strategies like Thompson sampling~\cite{kawale2015efficient}, upper confidence bound approximations for contextual bandits~\cite{bouneffouf2012contextual,mary2015bandits}, and reinforcement learning~\cite{wei2017reinforcement,lei2019collaborative}. 

This body of work, and indeed much work on recommender systems, focuses on the accuracy of the model.
This encodes an implicit assumption that the primary information needs of users or society are described by predictive performance.
Alternative measures proposed in the literature include concepts related to diversity or novelty of recommendations~\cite{castells2011novelty}.
Directly incorporating these objectives into a recommender system might include further predictive models of users, e.g. to determine whether they are ``challenge averse'' or ``diversity seeking''~\cite{tintarev2017presenting}.
Further alternative criteria arise from concerns of fairness and bias, and recent work has sought to empirically quantify parity metrics on recommendations~\cite{ekstrand2018all,ekstrand2018exploring}.
In this work, we focus more directly on agency rather than predictive models or observations.

Most similar to our work is a handful of papers 
focusing on decision systems through the lens of the agency of individuals.
We are most directly inspired by the work of~\citet{ustun2019actionable} on actionable recourse for binary decisions, where users seek to change negative classification through modifications to their features.
This work has connections to concepts in explainability and transparency via the idea of \emph{counterfactual explanations}~\cite{russell2019efficient,wachter2017counterfactual}, which provide statements of the form: if a user had features $X$, then they would have been assigned alternate outcome $Y$. 
Work in strategic manipulation studies nearly the same problem with the goal of creating a decision system that is robust to malicious changes in features~\cite{hardt2016strategic,milli2018social}.

Applying these ideas to recommender systems is complex because while they can be viewed as classifiers or decision systems, there are as many outcomes as pieces of content.
Computing precise action sets for recourse for every user-item pair is unrealistic; we don't expect a user to even become aware of the majority of items.
Instead, we consider the ``reachability'' of items by users, drawing philosophically from the fields of formal verification and dynamical system analysis~\cite{bansal2017hamilton,osher2006level}. %

\section{Problem Setting}

A recommender system considers a population of users and a collection of items. We denote a ``rating'' by user $u$ of item $i$ as $r_{ui}\in\calR\subseteq \R$. This value can be either explicit (e.g. star-ratings for movies) or implicit (e.g. number of listens). 
Let $\nusers$ denote the number of users in the system and $\nitems$ denote the number of items.
Though these are both generally quite large, the number of \emph{observed ratings} is much smaller.
Let $\Omega_u$ denote the set of items whose ratings by user $u$ have been observed.
We collect these observed ratings into a sparse vector $\r_u \in \calR^\nitems$ whose values are defined at $\Omega_u$ and $0$ elsewhere.
Then a system makes recommendations with a policy $\pi(\r_u)$ which returns a subset of items.\footnote{
	While we focus on the case of deterministic policies, the analyses can be extended to randomized policies which sample from a subset of items based on their ratings. It is only necessary to define reachability with respect to probabilities of seeing an item, and then to carry through terms related to the sampling distribution.
}
We will denote the size of the returned subset as $N$, which is a parameter of the system.

We are now ready to define the reachability sub-problem for a recommender system.
We say that a user $u$ can reach item $i$ if there is some allowable modification to their rating history $\r_u$ that causes item $i$ to be recommended. 
The reachability problem for user $u$ and item $i$ is defined as
\begin{align}
\begin{split}\label{eq:general_recourse}
\minimize_{\r \in \calM(\r_u)} \quad &\mathrm{cost}(\r; \r_u)\\
\text{subject to}\quad&i\in \pi(\r)
\end{split}
\end{align}
where the modification set $\calM(\r_u)\subseteq \calR$ describes how users are allowed to modify their rating history and $\mathrm{cost}(\r; \r_u)$ describes how ``difficult'' or ``unlikely'' it is for a user to make this change. This notion of difficulty might relate discretely to the total number of changes, or to the amount that these changes deviate from the existing preferences of the user.
By defining the cost with respect to user behavior, the reachability problem encodes both the \emph{possibilities} of recommendations through its feasibility, as well as the relative \emph{likelihood} of different outcomes as modeled by the cost.

The ways that users can change their rating histories, described by the modification set $\calM(\r_u)$, depends on the design of user input to the system.
We consider a single round of user reactions to $N$ recommendations and focus on two models of user behavior: changes to existing ratings, which we will refer to as ``history edits,'' and reaction to a batch of recommended items, which we will refer to as ``reactions.''
In the first case, $\calM(\r_u)$ consists of all possible ratings on the support $\Omega_u$. In the second, it consists of all new ratings on the support $\pi(\r_u)$ combined with the existing rating history. 

The reachability problem~\eqref{eq:general_recourse} defines a quantity for each user and item in the system.
To use this problem as a metric for evaluating recommender systems, we consider both user- and item-centric perspectives. For users, this is a notion of \emph{recourse}.
\begin{definition}
The \emph{amount} of recourse available to a user $u$ is defined as the percentage of unseen items that are reachable, i.e. for which~\eqref{eq:general_recourse} is feasible. The \emph{difficulty} of recourse is defined by the average value of the recourse problem over all reachable items $i$.
\end{definition}

On the other hand, the item-centric perspective centers around notions of \emph{availability and representation}. 

\begin{definition}
The \emph{availability} of items in a recommender system is defined as the percentage of items that are reachable by some user.
\end{definition}

These definitions are important from the perspective of guaranteeing fair representation of content within recommender systems. 
This is significant for users -- for example, to what extent have their previously expressed preferences limited the content that is currently reachable?
It is equally important to content creators, for whom the ability to build an audience depends on 
the availability of their content in the recommender system overall.

In what follows, we turn our attention to specific classes of preference models, for which the reachability problem is analytically tractable. 
However, we note that estimation via sampling can provide lower bounds on availability and recourse even for black-box models,
and further explore this observation in Section~\ref{sec:sufficient_sampling}.
We keep the main focus on analysis rather than sampling because it allows us to crisply distinguish between unlikely and impossible.

\subsection{Linear Preference Models} 

While many different approaches to recommender systems exist~\cite{ricci2011introduction}, ranging from classical neighborhood models~\cite{goldberg1992using} to more recent deep neural networks~\cite{covington2016deep}, we focus our attention on \emph{linear preference models}. 
These are models which predict user rating as the dot product between user and item vectors plus bias terms: 
\[\widehat r_{ui} = \itf_i^\top \uf_u + b_i + c_u + \mu\:. \] 
We will refer to $\itf_i$ and $\uf_u$ as item and user representations.
This broad class includes both item- and user-based neighborhood models, sparse approaches like SLIM~\cite{ning2011slim}, and matrix factorization, which differ only in how the user and item representations are determined from the rating data.
For ease of exposition, we drop the bias terms for the body of the paper and focus on \emph{matrix factorization}, deferring our general results and explanation to 
\ifcompileapp
Appendix~\ref{sec:app_bias}.
\else
Appendix A.
\fi
Matrix factorization is a classical approach to recommendation which become prominent during the Netflix Prize competition~\cite{bwebb2009}.
It can be specialized to different assumptions about data and user behavior, including constrained approaches like non-negative matrix factorization~\cite{gillis2014and} or augmentations like inclusion of implicit information about preferences and additional features~\cite{paterek2007improving,hu2008collaborative,rendle2013scaling,he2016fast,koren2009bellkor,koren2009matrix}.
Due to its power and simplicity, the matrix factorization approach is still widely used; indeed it has recently been shown to be capable of attaining state-of-the-art results~\cite{rendle2019difficulty,dacrema2019we}.

In this setting, the item and user representations are referred to as factors, lying in a latent space of specified dimension $d$ which controls the complexity of the model.
The factors can be collected into matrices $P\in\R^{n\times d}$ and $Q\in\R^{m\times d}$.
Fitting the model most commonly entails solving the nonconvex minimization:
\begin{align}
\minimize_{P,Q} \sum_{u} \sum_{i\in \Omega_u} (r_{ui}-\uf_u^\top \itf_i)^2 + \Gamma(P,Q) \label{eq:MF_general_problem}
\end{align}
where $\Gamma$ is a regularizer. 

The predicted ratings of unseen items are used to make recommendations.
Specifically, we consider top-$N$ recommenders which return 
\[\{i~:~\widehat r_{ui}>\widehat r_{uj}~\text{all but at most $N$ unseen items }j\}\:.\]
For linear preference models, the condition
$\widehat r_{ui} > \widehat r_{ui}$ reduces to
\[\itf_i^\top \uf_u >\itf_j^\top \uf_u \iff (\itf_i-\itf_j)^\top \uf_u > 0 \:.\]
Thus, for fixed item vectors, a user's recommendations are determined by their representation along with a list of unseen items.
In a slight abuse of notation, we will use this fact to write the recommender policy $\pi(\uf;\Omega)$ instead of $\pi(\r)$.

When users' ratings change, their representations change as well. 
While there are a variety of possible strategies for performing online updates, we focus on the least squares approach, where
\[\uf_u = \arg\min_{\uf} \|\r_{u,\Omega_u}-Q_{\Omega_u}\uf\|_2^2 + \Gamma_u(\uf)\:.\]
This is similar to continuing an alternating least-squares (ALS) minimization of~\eqref{eq:MF_general_problem}, a common strategy~\cite{zhou2008large}.
Because we analyze a single round of recommendations, we do not consider a simultaneous updates to the item representations in $\{\itf_i\}$.

In what follows, we focus on the canonical case of $\ell_2$ regularization on user and item factors, with $\Gamma_u(\vec x) = \Gamma_i(\vec x) = \lambda\|\vec x\|_2^2$. 
With this, the user factor calculation is given by:
\begin{align}
\uf_u = (Q_{\Omega_u}^\top Q_{\Omega_u} + \lambda I)^{-1}Q^\top \r_u\:, \label{eq:uf_update}
\end{align}
which is linear in the rating vector.
In 
\ifcompileapp
Appendix~\ref{sec:app_bias},
\else
Appendix A,
\fi
we show that several other linear preference models have similar user updates. 

\section{Recourse and Availability}\label{sec:recourse_and_availability}

We begin by reformulating the reachability problem to the case of recommendations made by matrix factorization models (the general reformulation for linear models is presented in 
\ifcompileapp
Appendix~\ref{sec:app_bias}).
\else
Appendix A).
\fi
We focus this initial exposition on the simplifying case that $N=1$ and make direct connections between between model factors and the recourse and availability provided by the recommender system.

First, we consider what needs to be true for an item $i$ to be recommended.
For  for top-$1$, the constraint $i\in\pi(\uf, \Omega)$ is equivalent to requiring that
\[(\itf_i-\itf_j)^\top \uf>0~~\forall~j\notin\Omega\iff G_i\uf > 0\:. \]
where we define $G_i$ to be a $m-|\Omega|\times d$ matrix with rows given by $(\itf_i-\itf_j)$ for $j\notin\Omega$. 
This is a linear constraint on the user factor $\uf$, and the set of user factors which satisfy this constraint make up an open convex polytopic cone.
We refer to this set as the \emph{item-region} for item $i$, since any user whose latent representation falls within this region will be recommended item $i$.
The top-$1$ regions partition the latent space, as illustrated by Figure~\ref{fig:polytope_illustration} for a toy example with latent dimension $d=2$.

\begin{figure}
\centering
\begin{subfigure}[t]{\halfsize\figwidthbase}
    \centering\includegraphics[width=\fullsize\figwidthbase]{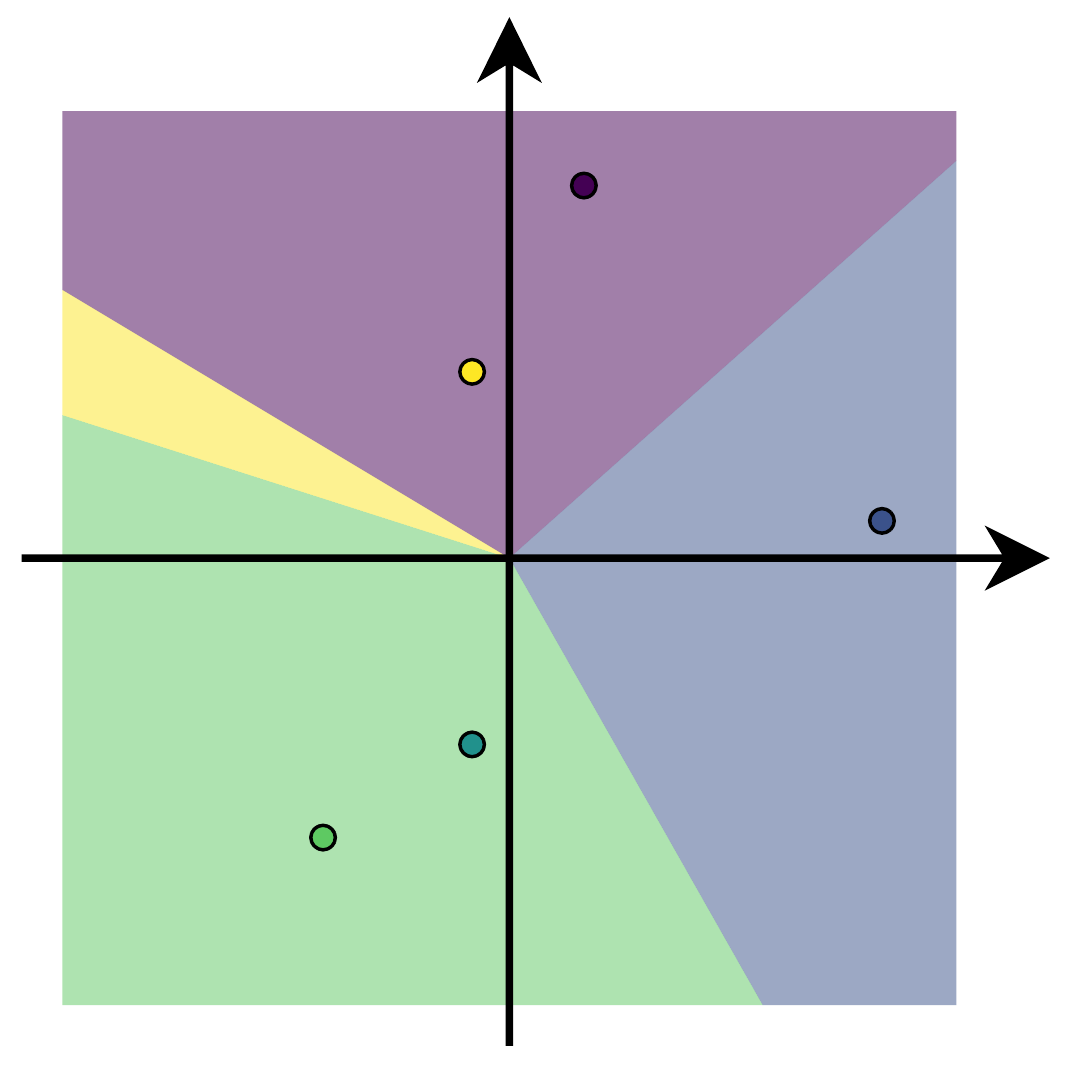}
    \caption{Items in Latent Space}
  \end{subfigure}
  \begin{subfigure}[t]{\halfsize\figwidthbase}
    \centering\includegraphics[width=\fullsize\figwidthbase]{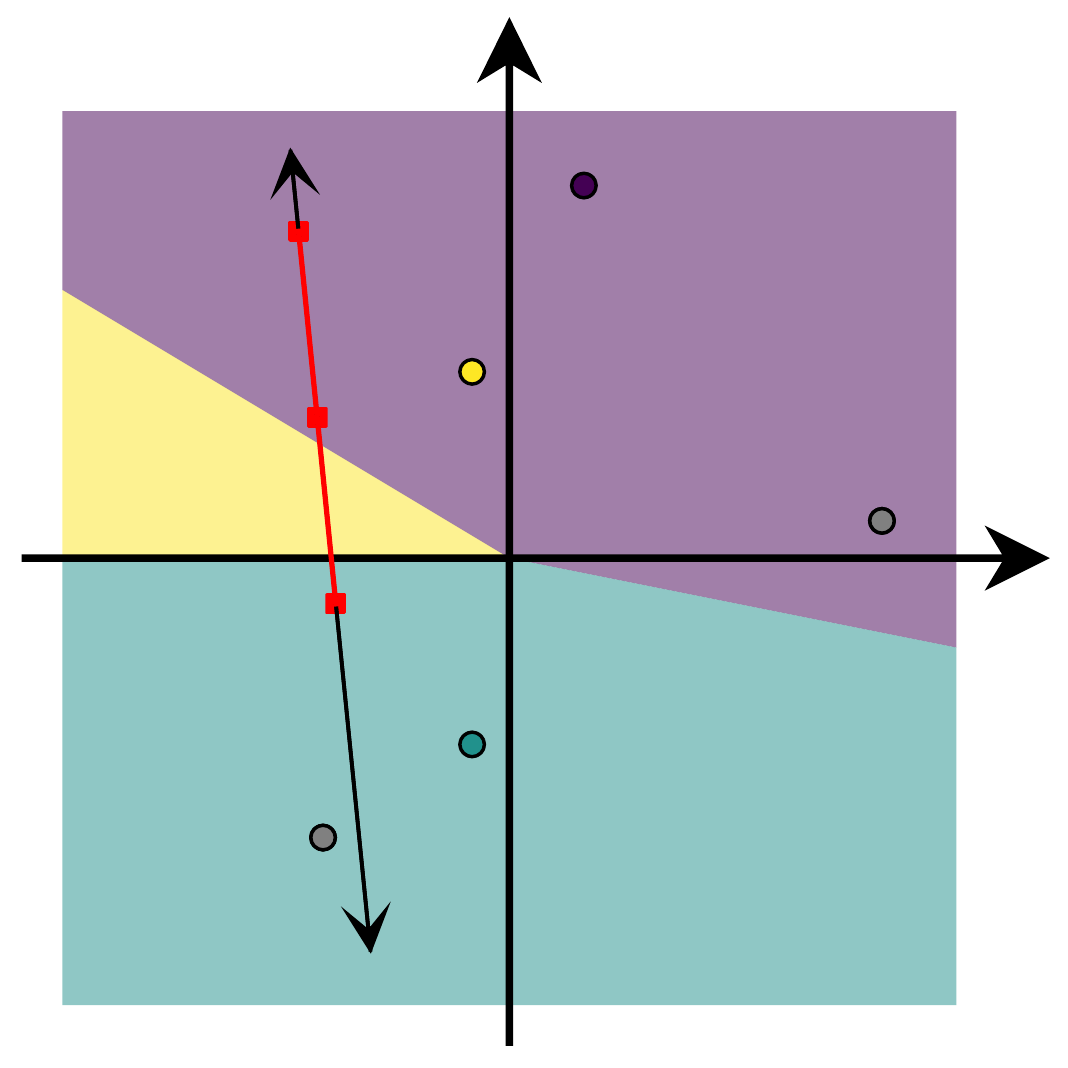}
    \caption{Availability of Items to a User}
  \end{subfigure}
\caption{
An example of item factors (indicted by colored points) in $d=2$. In (a), the top-1 regions are indicated by shaded colors.
The teal item is unavailable, and though the yellow item is reachable, it is not aligned-reachable.
In (b), the availability of items for a user who has seen the blue and the green items (now in grey) with the blue item's rating fixed.
The black line indicates how the user's representation can change
depending on their rating of the green item. The red region indicates the constraining effect of requiring bounded and integer-valued ratings, which affect the reachability of the yellow region.
}\label{fig:polytope_illustration}
\end{figure}

If item factors define regions within the latent space, user factors are points that may move between regions. %
The constraints on user actions are described by the modification set $\calM(\r_u)$. 
We will distinguish between \emph{mutable} and \emph{immutable} ratings of items within a rating vector $\r_u$.
Let $\Omega_0$ denote the set of items with immutable ratings and let $\r_0 \in \calR^{|\Omega_0|}$ denote the corresponding ratings.
Then let $\Omega_m$ denote the set of items with mutable ratings.
In what follows, we will write the full set of observed ratings $\Omega=\Omega_0\cup\Omega_m$.
Then the modification set is all rating vectors $\r\in\calR$ with:
\begin{enumerate}
  \item fixed immutable ratings, $\r_{\Omega_0} = \r_0$
  \item mutable ratings $\r_{\Omega_m}=\vec a$ for some value $\vec a \in \calR^{|\Omega_m|} $,  
  \item unseen items with no rating, $\r_{\Omega^\comp} = 0$
\end{enumerate}
The variable $\vec a$ is the decision variable in the reachability problem.

Then a user's latent factor can change as %
\begin{align*}
\uf &= (Q_\Omega^\top Q_\Omega + \lambda I)^{-1}(Q_{\Omega_0}^\top \r_0 + Q_{\Omega_m}^\top \vec a)=  \vec v_0 + B \vec a
\end{align*}
where we define
\begin{align*}
W &= (Q_\Omega^\top Q_\Omega + \lambda I)^{-1}\:,\quad
B = W Q_{\Omega_m}^\top\:,\quad \vec v_0 = W Q_{\Omega_0}^\top \r_0\:.
\end{align*}
It is thus clear that this latent factor lies in an affine subspace.
This space is anchored at $\vec v_0$ by the immutable ratings, while the mutable ratings determine the directions of possible movement.
This idea is illustrated in Figure~\ref{fig:polytope_illustration}b, which further demonstrates the limitations due to bounded or discrete ratings, as encoded in the rating set $\calR$.

We are now able to specialize reachability problem~\eqref{eq:general_recourse} for matrix factorization models:
\begin{align}
\begin{split}\label{eq:mf_recourse}
\minimize_{\vec a \in \calR^{|\Omega_m|}} \quad &\mathrm{cost}([\r_0; \vec a]; \r_u)\\
\text{subject to}\quad& G_i (\vec v_0 + B\vec a) > 0 
\end{split}
\end{align}
If the cost is a convex function and $\calR$ is a convex set, this is a convex optimization problem which can be solved efficiently.
If $\calR$ is a discrete set or if the cost function incorporates nonconvex phenomena like sparsity, then this problem can be formulated as a mixed-integer program (MIP).
Despite bad worst-case complexity, MIP can generally be solved quickly with modern software~\cite{gurobi,ustun2019actionable}.

\subsection{Item Availability}

Beyond defining the reachability problem, we seek to derive properties of recommender systems based on their underlying preference models. 
We begin by considering the feasibility of~\eqref{eq:mf_recourse} with respect to its linear inequality constraints. 
For now, we focus on the item-regions, ignoring the effects of user history $\Omega$, anchor point $\vec v_0$, and control matrix $B$.
In the following result, we consider the \emph{convex hull} of unseen item factors, which is the the smallest convex set that contains the item factors.
This can be formally written as,
\[\mathrm{conv}(\{\itf_j\}_{j})= 
\Big\{\sum_{j} \lambda_j \itf_j~:~\sum_{j} \lambda_j \leq 1~\text{and}~ \lambda \geq 0\Big\} \:.\]
Furthermore, we will consider \emph{vertices} of the convex hull, which are item factors that are not contained in the convex hull of the other factors, i.e. $\itf_i \notin \mathrm{conv}(\{\itf_j\}_{j\neq i})$.

\begin{result}\label{res:convex_hull}
In a top-$1$ recommender system, the available items are those whose factors are vertices on the convex hull of all item factors.
\end{result}
As a result, the availability of items in a top-$1$ recommender system is determined by the way the item factors are distributed in space: it is simply the percentage of item factors that are vertices of their convex hull.
The proof is provided in 
\ifcompileapp
Appendix~\ref{sec:app_bias},
\else
Appendix A,
\fi
along with proofs of all results to follow.

We can further understand the effect of limited user movement in the case that ratings are real-valued, i.e. $\calR=\R$. 
In this case, we consider
the multiplication of item factors by the transpose of the control matrix, the \emph{multiplied factors} $B^\top \itf_i$.

\begin{result} \label{res:user_convex_hull}
In a top-$1$ recommender system, a user can reach any item $i$ whose \emph{multiplied factor} is a vertex of the convex hull of all unseen multiplied item factors.
Furthermore, if the factors of the items with mutable ratings are full rank, i.e. $Q_{\Omega_m}$ has rank equal to the latent dimension of the model $d$, then item availability  implies user recourse.
\end{result}

The second statement in this result means that for a model with $100\%$ item availability, having as many mutable ratings as latent dimensions is sufficient for ensuring that users have full recourse (so long as the associated item factors are linearly independent).
This observation highlights that increased model complexity calls for more comprehensive user controls to maintain the same level of recourse.

Of course, this conclusion follows only from considering the \emph{possibilities} of user action -- to consider a notion of \emph{likelihood} for various outcomes we need to consider the cost.

\subsection{Bound on Difficulty of Recourse}

We now propose a simple model for the cost of user actions, and use this to show a bound on the difficulty of recourse for users. 
The cost of user actions can be modeled as a penalty on \emph{change from existing ratings}. 
For items whose ratings have not already been observed, we penalize instead the \emph{change from predicted ratings}.
For simplicity, we will represent this penalty as the norm of the difference.

For history edits, all mutable items have been observed, so we simply have
\[\mathrm{cost}_\mathrm{hist}(\r;\r_u) = \|\r-\r_u\|\:.\]
Additionally, all existing ratings are mutable so mutable set $\Omega_m = \Omega_u$ and immutable set $\Omega_0 = \emptyset$. 
For reactions, the ratings for the new recommended items have not been observed, so
\[\mathrm{cost}_\mathrm{react}(\r;\r_u) = \|\r_{\pi(\r_u)}-\widehat\r_{\pi(\r_u)}\|\:.\]
Additionally, the rating history is immutable so $\Omega_0 = \Omega_u$ while the mutable ratings are the recommendations with $\Omega_m = \pi(\r_u)$.

We note briefly that our choice of handling the cost of unobserved ratings as ``change from predicted ratings'' assumes a level of model validity. 
While it does allow us to avoid any external behavioral modeling, it represents a simple case that perhaps over-emphasizes the role of the model.
Exploring alternative cost functions is important for future work.

Under this model, we provide an upper bound on the difficulty of recourse.
This result holds for the case that ratings are real-valued, i.e. $\calR=\R$ and that the reachable items satisfy an \emph{alignment} condition
\ifcompileapp
(defined in~\eqref{eq:assumption} of Appendix~\ref{sec:app_bias}).
\else
(defined in (8) of Appendix A).
\fi

\begin{result} \label{res:cost_bound}
Let $\uf_u$ indicate the user's latent factor as in~\eqref{eq:uf_update} before any actions are taken or the next set of recommendations are added to the user history.
Then both in the case of full history edits and reactions,
\[\text{difficulty of recourse for user $u$} \leq \|B^\dagger\| \cdot \frac{1}{|\Omega_r|}\sum_{i\in\Omega_{r}} \|\itf_i-\uf_u\| \:,\]
where $\Omega_{r}\subseteq\Omega^\comp$ is the set of reachable items.
\end{result}
This bound depends how far item factors are from the initial latent representation of the user.
When latent representations are close together, recourse is easier or more likely--an intuitive relationship.
This quantity will be large in situations where a user is in an isolated niche, far from most of the items in latent space.
The bound also depends on the conditioning of the user control matrix $B$, which is related to the similarity between mutable items:
the right hand side of the bound will be larger for sets of mutable items that are more similar to each other.

The proof of this result hinges on showing the existence of a specific feasible point to the optimization problem in~\eqref{eq:mf_recourse}.
In the Section~\ref{sec:item_results} we will further explore this idea of feasibility to develop lower bounds on availability and recourse when $N>1$.
First, we described how the presented results can be used to evaluate solutions to the user cold-start problem.

\subsection{User Cold-Start}

The amount and difficulty of recourse for a user yields a novel perspective on how to incorporate new users into a recommender system.
The user cold-start problem is the challenge of selecting items to show a user who enters a system with no rating history from which to predict their preferences.
This is a major issue with collaboratively filtered recommendations, and systems often rely on incorporating extraneous information~\cite{schein2002methods}.
These strategies focus on presenting items which are most likely to be rated highly or to be most informative about user preferences~\cite{biswas2017combating}.

The idea of recourse offers an alternative point of view. 
Rather than evaluating a potential ``onboarding set'' only for its contribution to model accuracy, we can choose a set which additionally ensures some amount of recourse.
Looking to Result~\ref{res:user_convex_hull}, we can evaluate an onboarding set by the geometry of the \emph{multplied factors} in latent space.
In the case of onboarding, $\vec v_0=0$ and $B=WQ_{\Omega}^\top$, so the recourse evaluation involves considering the vertices of the convex hull of the columns of the matrix
$\{Q_{\Omega} W Q^\top_{\Omega^\comp} \}$.

An additional perspective is offered by considering the difficulty of recourse.
In this case, we focus on $\|B^\dagger\|$. 
If we consider an $\ell_2$ norm, then it reduces to
\begin{align*}
\|B^\dagger\| = \max_{i} \frac{\sigma_i^2+\lambda}{\sigma_i} 
\end{align*}
where $\sigma_1 \geq \sigma_2\geq ... \geq \sigma_r > 0$ are the nonzero singular values of $Q_\Omega$.
Minimizing this quantity is hard~\cite{ccivril2009selecting}, 
though the hardness of selecting informative item sets is unsurprising as it has been discussed in related settings~\cite{biswas2017combating}.
Due to their computational challenges,
we primarily propose that these metrics be used to distinguish between candidate onboarding sets, based on the ways these sets provide user control.
We leave to future work the task of generating candidate sets based on these recourse properties.

\section{Sufficient Conditions for Top-N} \label{sec:item_results}

In the previous section, we developed a characterization of reachability for top-$1$ recommender systems. 
However, most real world applications involve serving several items at once.
Furthermore, using $N>1$ can approximate the availability of items to a user over time, as they see more items and increase the size of the set $\Omega$ which is excluded from the selection. 
In this section, we focus on sufficient conditions to develop a computationally efficient model audit that provides lower bounds on the \emph{availability} of items in a model.
We further provide approximations for computing a lower bound on the \emph{recourse} available to users.

We can define an item-region for the top-$N$ case, when
$i\in\pi(\uf;\Omega)$ for any user factors in the set
\[\calP_i =\{\uf~:~(\itf_i-\itf_j)^\top \uf>0~\text{all but at most $N$ items }j\notin\Omega\}\:.\]

As in the previous section, this region is contained within the latent space, which is generally of relatively small dimension.
However, its description depends on the number of items, which will generally be quite large.
In the case of $N=1$, this dependence in linear and therefore manageable.  
For $N>1$, the item region is the union over 
polytopic cones for subsets describing ``all but at most $N$ items.''
Therefore, the description of each item region requires $\mathcal{O}(m^N)$ linear inequalities.
For systems with tens of thousands of items, even considering $N=5$ becomes prohibitively expensive.

To ease the notational burden of discussing the ranking logic around top-$N$ selection in what follows, we define the operator $\maxn$, which selects the $N$th largest value from a set. This allows us to write, for example,
\[\widehat r_{ui}>\widehat r_{uj}~\text{all but at most $N$ items }j\iff \widehat r_{ui} > {\maxn_{j\neq i}}~ \widehat r_{uj} \:.\]

\subsection{Sufficient Condition for Availability}\label{sec:sufficient_sampling}
To bypass the computational concerns, we focus on a sufficient condition for item availability.
The full description of the region $\calP_i$ is not necessary to verify non-emptiness; rather, 
showing the existence of any point in the latent space $\vec v\in\R^d$ that satisfies $\vec v \in \calP_i$ is sufficient.
Using this insight, we propose a sampling approach to determining the availability of an item.
For a fixed $\vec v$ and any $N$, it is necessary only to compute and sort $Q_{\Omega^\comp}\vec v$, which is an operation of complexity $\mathcal{O}(m^2 d\log(m))$.

While this sampling approach could make use of gridding, randomness, or empirical user factor distributions, we propose choosing the sample point $\vec v =\itf_i$.
 \begin{result}\label{res:item_aligned}
The item-region $\calP_i$ is nonempty if
\begin{align}
\delta_i = \|\itf_i\|_2^2 - {\maxn_{j\notin \Omega\cup\{i\}}}~ \itf_j^\top \itf_i > 0\:. \label{eq:aligned_reach_condition}
\end{align} 
When this condition holds, we say that item $i$ is \emph{aligned-reachable}.
The percent of items that are aligned-reachable is a lower bound on the availability of items.
\end{result}
Note that this is a sufficient rather than a necessary condition; it is possible to have $\itf_i \notin \calP_i$ for a nonempty $\calP_i$. Figure~\ref{fig:polytope_illustration} illustrates an example where this is the case: the yellow latent factor lies within $\calP_\text{purple}$ even though $\calP_\text{yellow}$ is nonempty.
As a result, aligned-reachability yields an underestimate of the availability of items in a system.

\subsection{Model Audit} \label{sec:item-audit}

To use the aligned-reachable condition~\eqref{eq:aligned_reach_condition} as a generic model audit, we need to sidestep the specificity of the set of seen items $\Omega$.
We propose an audit based on this condition with $\Omega=\emptyset$ and an increased value for $N$,
where increasing $N$ compensates for discarding the effect of the items which have been seen. 
This audit is described in Algorithm~\ref{alg:model_audit}
.
If we set $N'=N+N_h$ where $N$ is the number of items recommended by the system, then item availability has the following interpretation:
if an item is not top-$N'$ available, then that item will never be recommended to a user who has seen fewer than $N_h$ items.

\RestyleAlgo{boxruled}
\LinesNumbered
\begin{algorithm}
\SetAlgoLined
\KwResult{Lower bound on item availability $1-\frac{m_\text{unavailable}}{m}$}
initialize $m_\text{unavailable}=0$, $N'=N+N_h$\;
 \For{$i\in\{1,\dots,m\}$}{
  compute $\delta_i=\|\itf_i\|_2^2 - {\maxnp_{j\neq i}}~ \itf_j^\top \itf_i$\;
  \If{$\delta_i\leq0$}{
   $m_\text{unavailable} += 1$
 }}
 \caption{Item-Based Model Audit}\label{alg:model_audit}
\end{algorithm}

If we consider the set of all possible users to be users with a history of at most $N_h$, this model audit counts the number of aligned-unreachable items, returning lower bound on the overall \emph{availability} of items.
We can further use this model audit to propose constraints or penalties on the model during training.
Ensuring aligned-reachability is equivalent to imposing linear constraints on the matrix $A=QQ^\top$,
\[A_{ii} \geq \maxn_{j\neq i} A_{ij}\:. \]
While this constraint is not convex, relaxed versions of it could be incorporated into the optimization problem~\eqref{eq:MF_general_problem} to ensure reachability during training. 
We point this out as a potential avenue for future work.

\subsection{Sufficient Condition for Recourse} \label{sec:user_results}

User recourse inherits the computational problems described above for $N>1$.
We note that the region $\calP_i$ is not necessarily convex, though it is the union of convex regions. 
While the problem could be solved by first minimizing within each region and then choosing the minimum value over all regions, this would not be practical for large values of $N$.
Therefore we continue with the sampling perspective to develop an efficient sufficient condition for verifying the feasibility of~\eqref{eq:mf_recourse}. 
We propose testing feasibility with the condition
\begin{align}
\vec v_0 + B\vec a_i \in\calP_i ~\text{for}~\vec a_i \in \arg\min_{\calR^k} \|B\vec a+\vec v_0-\itf_i\|^2_2\:.
\end{align}
By checking feasibility for each $i$, we verify a lower bound on the \emph{amount of recourse} available to a user, considering their specific rating history and the allowable actions.

Note that if the control matrix $B$ is full rank, then we can find a point $\vec a_i$ such that $\vec v_0 + B\vec a_i = \vec \itf_i $, meaning that items who are aligned-reachable are also reachable by users.
The rank of $B$ is equal to the rank of $Q_{\Omega_m}$, so as previously observed, item availability implies recourse for any user with control over at least $d$ ratings whose corresponding item factors are linearly independent.

Even users with incomplete control have some level of
recourse.
For the following result, we define $\proj_B$ as the projection matrix onto the subspace spanned by $B$.
Then let $\itf_{B,i}=\proj_B\itf_i$ be the component of the target item factor that lies in the space spanned by the control matrix $B$, and $\vec v_{\perp} = \vec v_0 - \proj_B\vec v_0$ be the component of the anchor point that cannot be affected by user control. 

\begin{result} \label{res:user_aligned_reach}
When $\calR = \R$,  a lower bound on the amount of recourse for a user $u$ is given by the percent of unseen items that satisfy:
\begin{align*}
\|\itf_{B,i}\|_2^2 &+ \itf_i^\top \vec v_{\perp} > {\maxn_{j\neq i}}~ \left(\itf_j^\top \itf_{B,i} + \itf_j^\top \vec v_{\perp} \right)\:.
\end{align*}

\end{result}
Note that this statement mirrors the sufficient condition for items, with modifications relating both to the directions of user control and the anchor point.
In short, user recourse follows from the ability to modify ratings for a set of diverse items, and  
immutable ratings ensure the reachability of some items, potentially at the expense of others. %

\begin{figure}
\center
\includegraphics[width=\fullsize\figwidthbase]{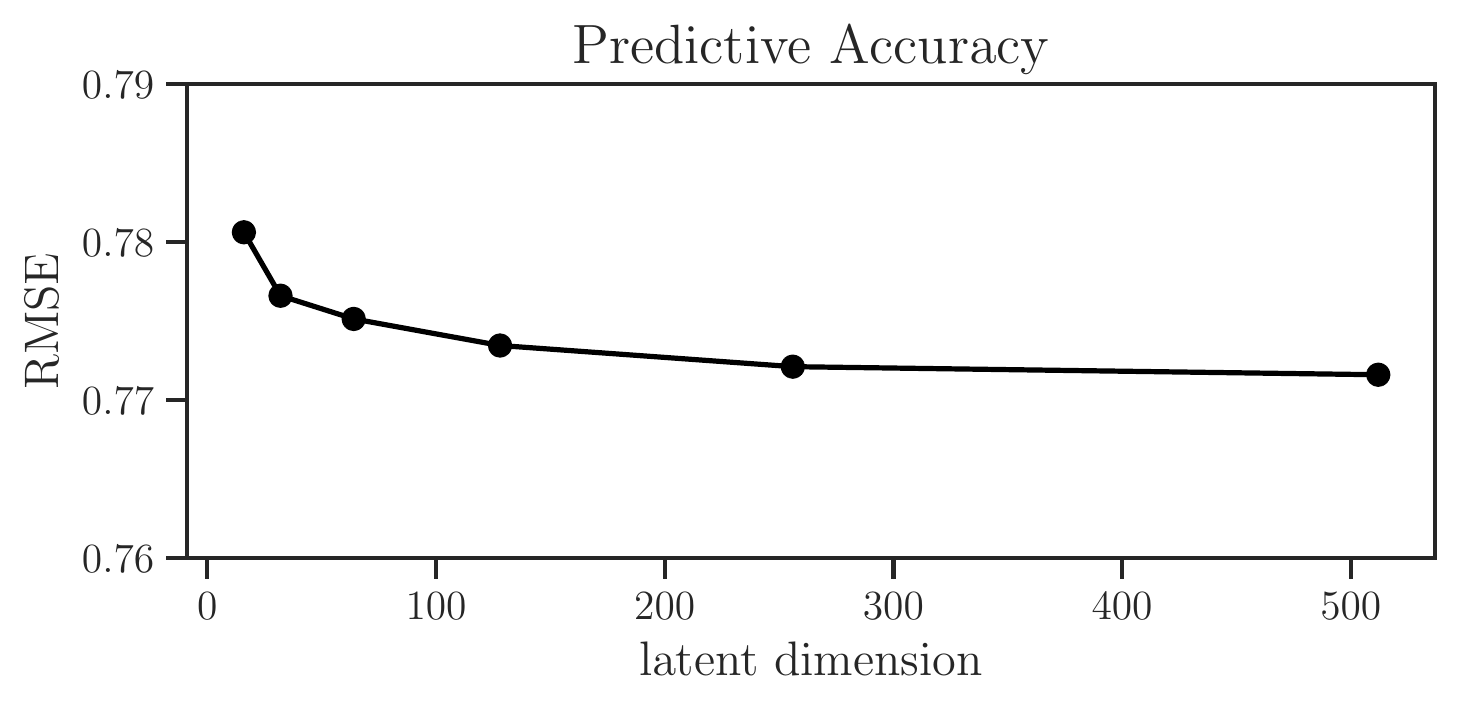}
\caption{The test RMSE of the matrix factorization models on the MovieLens dataset.}\label{fig:model_acc}
\end{figure}
\section{Experimental Demonstration} \label{sec:experiments}

In this section, we demonstrate how our proposed analyses can be used a a tool to audit and interpret characteristics of a matrix factorization model.
We use 
the MovieLens 10M dataset, which comes from an online movie recommender service called MovieLens~\cite{harper2016movielens}.
The dataset\footnote{
\url{http://grouplens.org/datasets/movielens/10m/}}  contains approximately 10 million ratings applied to 10,681 movies by 71,567 users.
The ratings fall between $0$ and $5$ in $0.5$ increments.

We chose this dataset because it is a common benchmark for evaluating rating predictions.
Using the method described by~\citet{rendle2019difficulty} in their recent work on baselines for recommender systems, we train a regularized matrix factorization model.
This is model incorporates item, user, and overall bias terms. 
\ifcompileapp
(Appendix~\ref{sec:app_bias}
\else
(Appendix A
\fi
includes full description of adapting our proposed audits to this model).

We examine models of a variety of latent dimension ranging from $d=16$ to $d=512$. 
The models were trained using the libfm\footnote{\url{http://www.libfm.org/}} library~\cite{rendle:tist2012}.
We use the regression objective and optimize using SGD with regularization parameter $\lambda=0.04$ and step size $0.003$ for $128$ epochs on $90\%$ of the data, verifying accuracy on the remaining $10\%$ with a random global test/train split. 
These methods match those presented by~\citet{rendle2019difficulty} and reproduce their reported accuracies (Figure~\ref{fig:model_acc}).

In 
\ifcompileapp
Appendix~\ref{sec:experiments_app},
\else
Appendix B,
\fi
we present a similar set of experiments on the LastFM dataset.

\begin{figure}
\center
\includegraphics[width=\fullsize\figwidthbase]{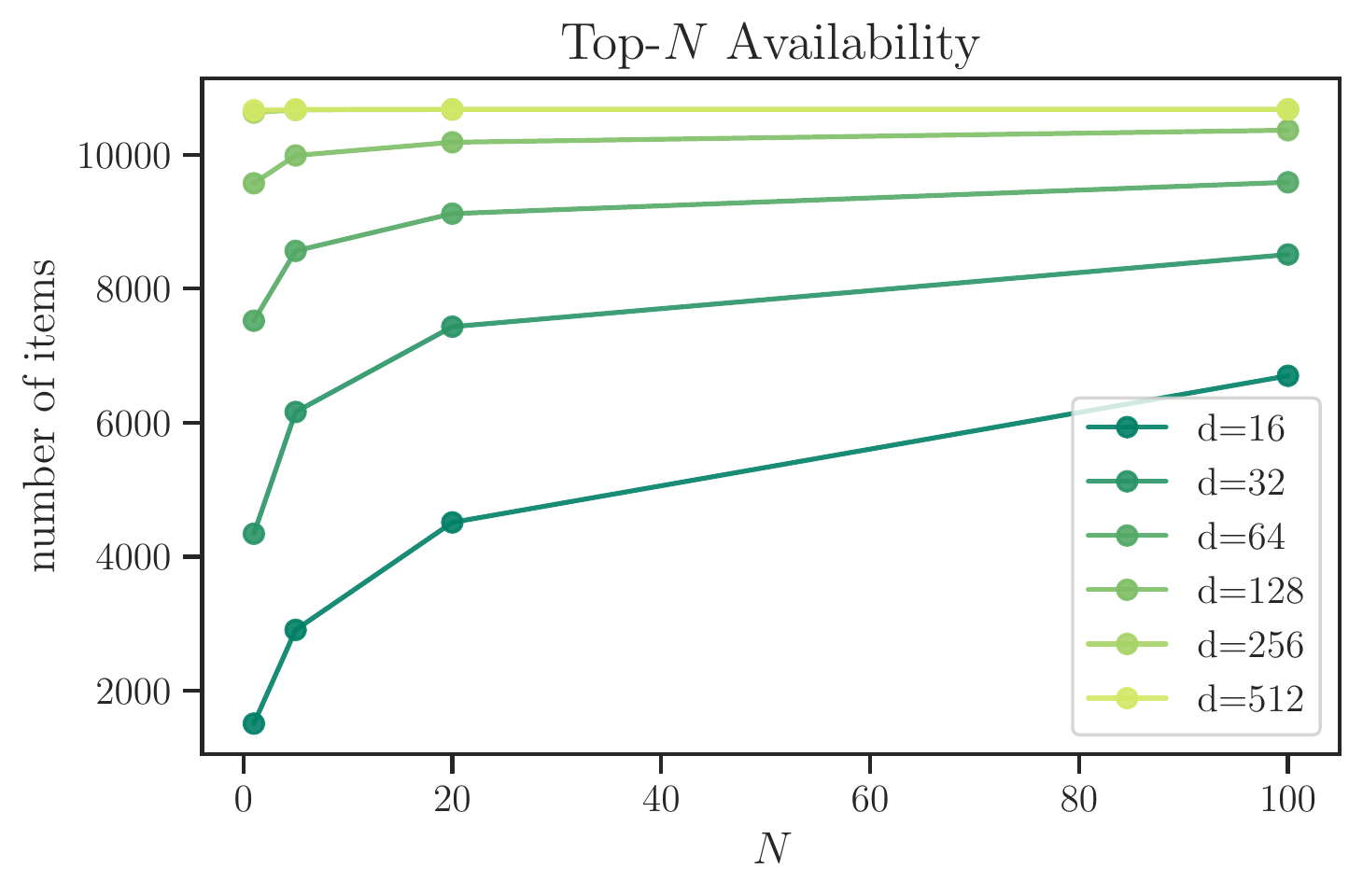}
\caption{Only some of the 10,681 total movies are aligned-reachable, especially for models with smaller complexity and for smaller recommendation set sizes $N$.}\label{fig:item_v_N}
\end{figure}

\begin{figure}
\center
\includegraphics[width=\fullsize\figwidthbase]{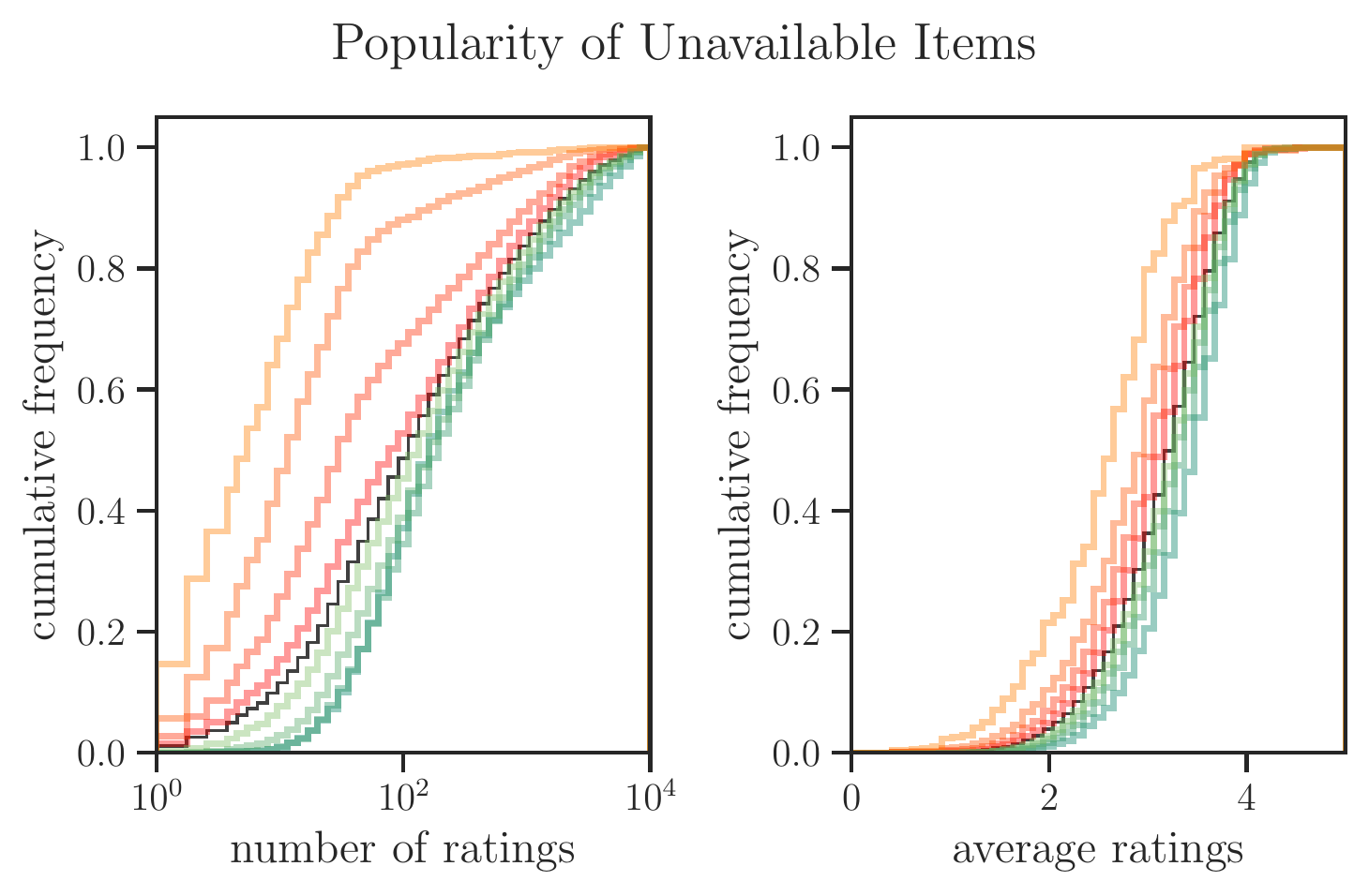}
\includegraphics[width=\fullsize\figwidthbase]{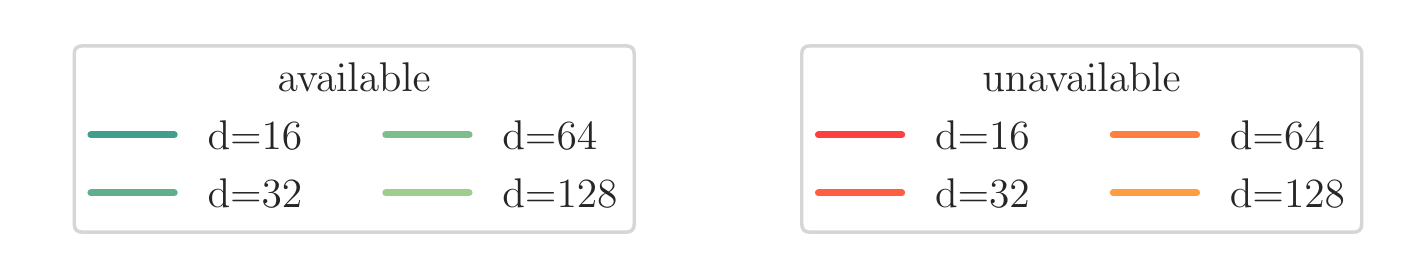}
\caption{
Unavailable items are systematically less popular than available items: they are rated less frequently and have lower average ratings in the training data.
Each curve represents the cumulative density function (CDF) of the popularity measure within the available (green) and unavailable (red) items.
The black line represents the CDF of the combined population.
This trend is true for models of varying complexity. 
}\label{fig:rating_distributions}
\end{figure}

\subsection{Item-Based Audit}
We begin by performing the item-based audit as described in Section~\ref{sec:item-audit}. 
Figure~\ref{fig:item_v_N} displays the total number of aligned-reachable movies for various parameters $N$.
It is immediately clear that all items are baseline-reachable in only the models with the largest latent dimension.
Indeed, we can conclude that the model with $d=16$ has only about $60\%$ availability for users with a history of under 100 movies.
On the other hand, the models with the highest complexity $d\in\{256, 512\}$ have about $99\%$ availability for even the smallest values of $N$. 

We further examine the characteristics of the items that are unavailable compared with those that are available. 
We examine two notions of popularity: total number of ratings and average rating. 
In Figure~\ref{fig:rating_distributions}, we compare the distributions of the available and unavailable items (for $N=5$) in the training set on these measures.
The unavailable items have systematically lower popularity for various latent dimensions.
This observation has implications for the outcome of putting these models in feedback with users.
If unavailable items are never recommended, they will be less likely to be rated, which may exacerbate their unavailability if models are updated online.
We highlight this phenomenon as a potential avenue for future work.

While the difference in popularity is true across all models, it is important to note that there is still overlap in the support of both distributions. 
For a given number of ratings or average rating, some items will be available while others will not, meaning that popularity alone does not determine reachability.

\begin{figure}
\center
\includegraphics[width=\fullsize\figwidthbase]{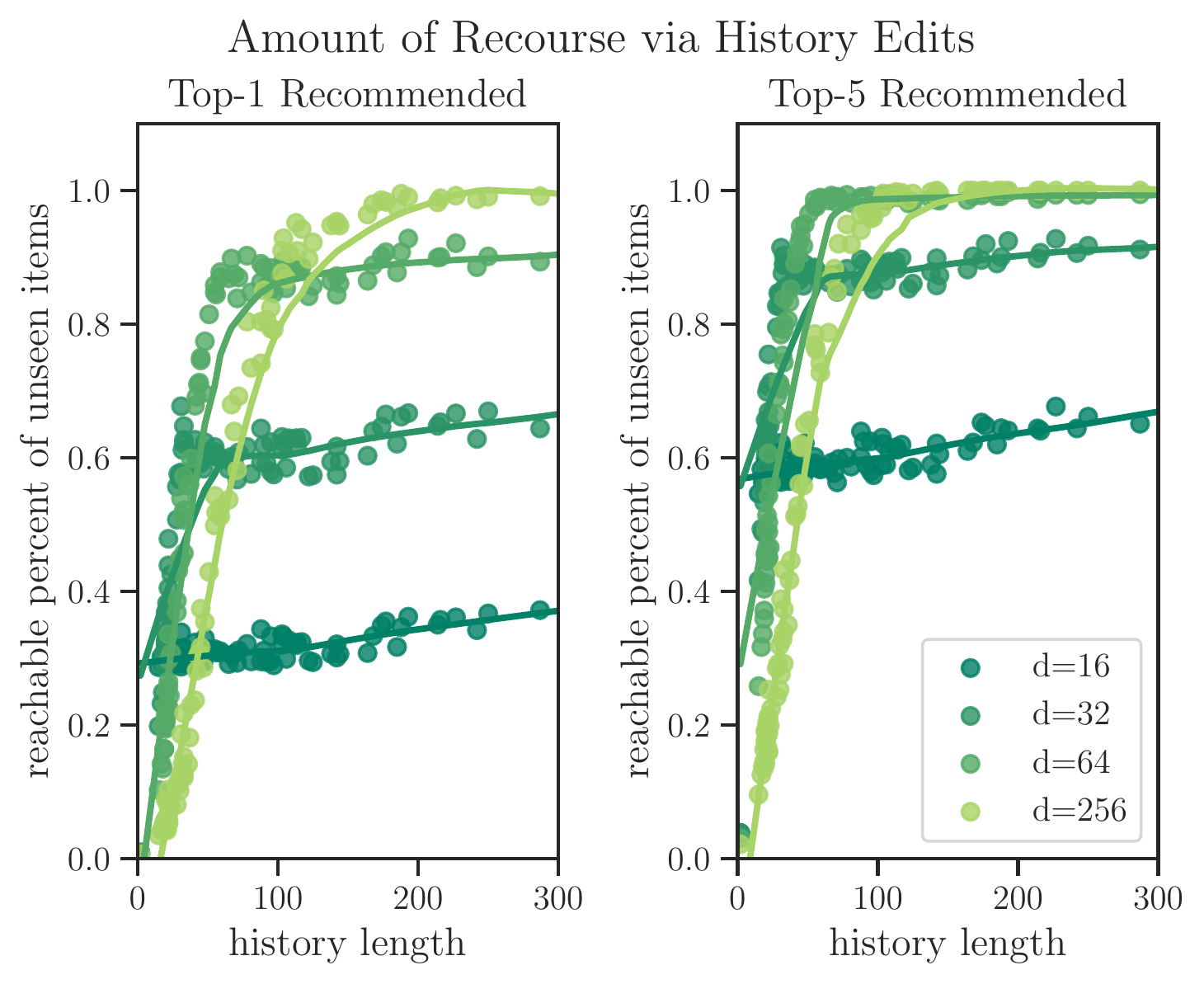}
\caption{The proportion of unseen items reachable by users varies with their history length.  A LOESS regressed curve illustrates the trend. Less complex models are better for shorter history lengths, while more complex models reach higher overall values.}\label{fig:user_hist_reachable}
\end{figure}

\subsection{System Recourse for Users}

Next, we examine the users in this dataset. 
We use the combined testing and training data to determine user ratings $\vec r_u$ and histories $\Omega_u$.
For this section, we examine 100 randomly selected users and only the 1,000 most rated items. 
Sub-selecting items and especially choosing them based on popularity means that these experimental results are an overestimation of the amount of recourse available to users.
Additionally, we allow ratings on the continuous interval $\calR = [0,5]$ rather than enforcing integer constraints, meaning that our results represent the recourse available to users if they were able to precisely rate items on a continuous scale.
Despite these two approximations, several interesting trends on the limits of recourse appear.

We begin with history edits, and compute the amount of recourse that the system provides to users using the sufficient condition in Result~\ref{res:user_aligned_reach}.
Figure~\ref{fig:user_hist_reachable} shows the relationship with the length of user history for several different latent dimensions.
First, note the shape of the curved: a fast increase and then leveling off for each dimension $d$.
For short histories, we see the limiting effect of projection onto the control matrix $\proj_B$. For longer histories, as the rank of $Q_{\Omega_u}$ approaches or exceeds $d$, the baseline item-reachability determines the effect.
The transition between these two regimes differs depending on the latent dimension of the model.
Smaller models reach their maximum quickly, while models of higher complexity provide a larger amount of recourse to users with long histories.
This is an interesting distinction that connects to the idea of ``power users.''

\begin{figure}
\center
\includegraphics[width=\fullsize\figwidthbase]{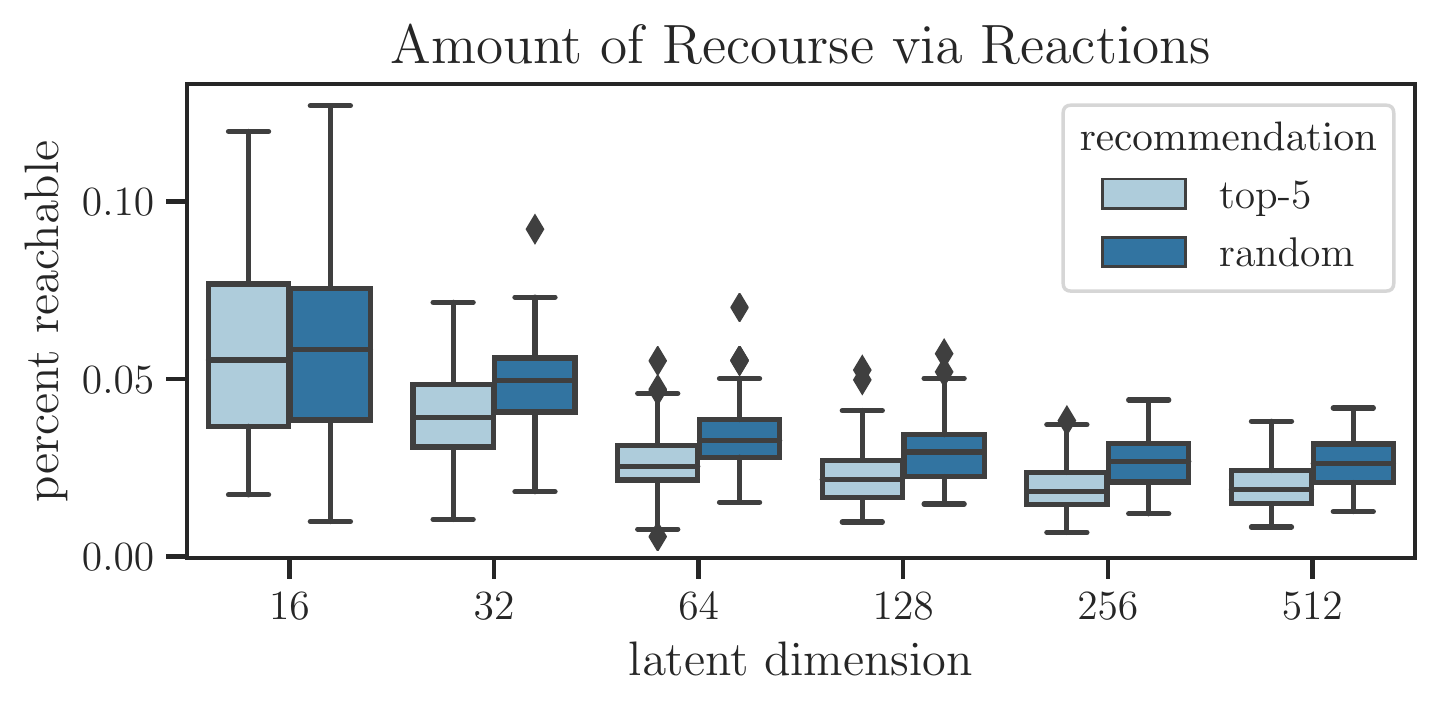}

\includegraphics[width=\fullsize\figwidthbase]{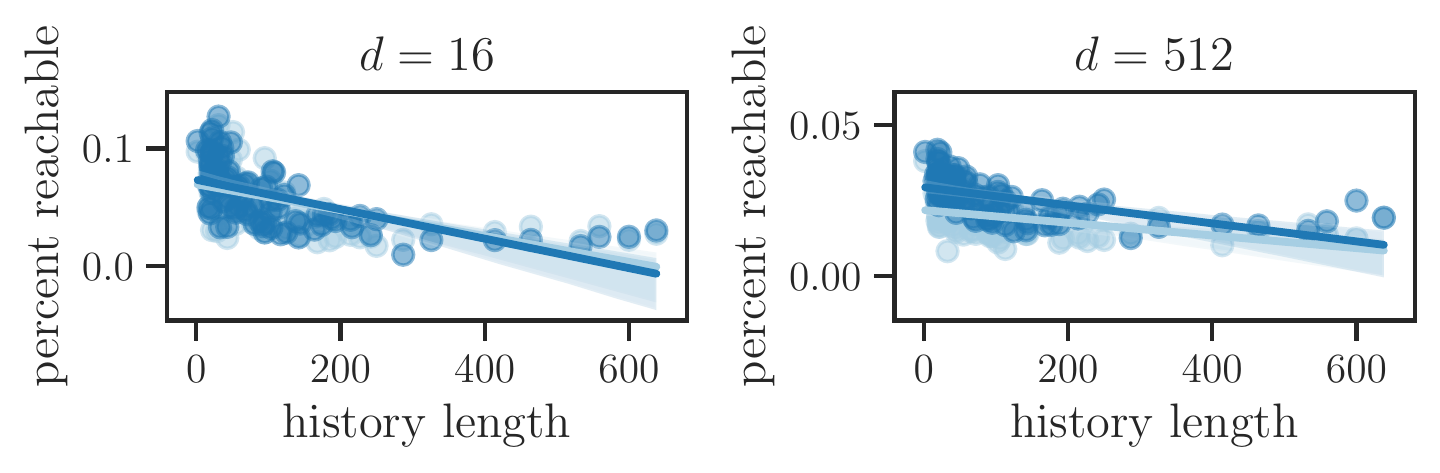}
\caption{When actions are constrained to reaction to a set of items, lower complexity models provide higher reachability. A random set of items provides slightly more recourse to users than if the set is selected based on predicted user preferences. Furthermore, there is a slight trend that users with smaller history lengths have more available recourse. }\label{fig:user_next_reachable}
\end{figure}

Next, we consider reactions, where user
input comes only through reaction to a new set of items while the existing ratings are fixed.
Figure~\ref{fig:user_next_reachable} displays the amount of recourse for two different types of new items:
first, the case that users are shown a completely random set of $5$ unseen items and second, the case that they are shown the $5$ items with the highest predicted ratings.
The top panel displays the amount of recourse provided by each model and each type of recommendation.
There are two important trends.
First, smaller models offer larger amounts of recourse--this is because we are in the regime of few mutable ratings, analogous to the availability of items to users with short histories in the previous figure.
Second, for each model size, the random recommendations provide more recourse than the top-$5$, and though the gap is not large it is consistent.

In the bottom panel of Figure~\ref{fig:user_next_reachable}, we further examine how the length of user history interacts with this model of user behavior. 
For both the smallest and the largest latent dimensions, there is a downwards trend between reachability and history length. 
This does not contradict the trend displayed in Figure~\ref{fig:user_hist_reachable}: in the reactions setting, the rating history manifests as the anchor point $\vec v_0$ rather than additional degrees of freedom in the control matrix $B$ .
It is interesting in light of recent works examining the usefulness of recency bias in recommender systems~\cite{matuszyk2015forgetting}.

Finally, we investigate the difficulty of recourse over all users and a single item.
In this case, we consider top-$1$ recommendations to reduce the computational burden of computing the exact set $\calP_i$.
We pose the cost as the size of the difference between the user input $\vec a$ and the predicted ratings in the $\ell_1$ norm.
Figure~\ref{fig:user_next_reachable_difficulty} shows the difficulty of recourse via reaction for the two types of new items: a completely random set of $20$ unseen items and $20$ items with the highest predicted ratings.
We note two interesting trends.
First, the difficulty of recourse does not increase with model size (even though the amount of recourse is lower).
Second, difficulty is lower for the random set of items than for the top-$20$ items. 
Along with the trend in availability, this suggests a benefit of suggesting items to users based on metrics other than predicted rating.
Future work should more carefully examine methods for constructing recommended sets that trade-off predicted ratings with measures like diversity under the lens of user recourse.

\begin{figure}
\center

\includegraphics[width=\fullsize\figwidthbase]{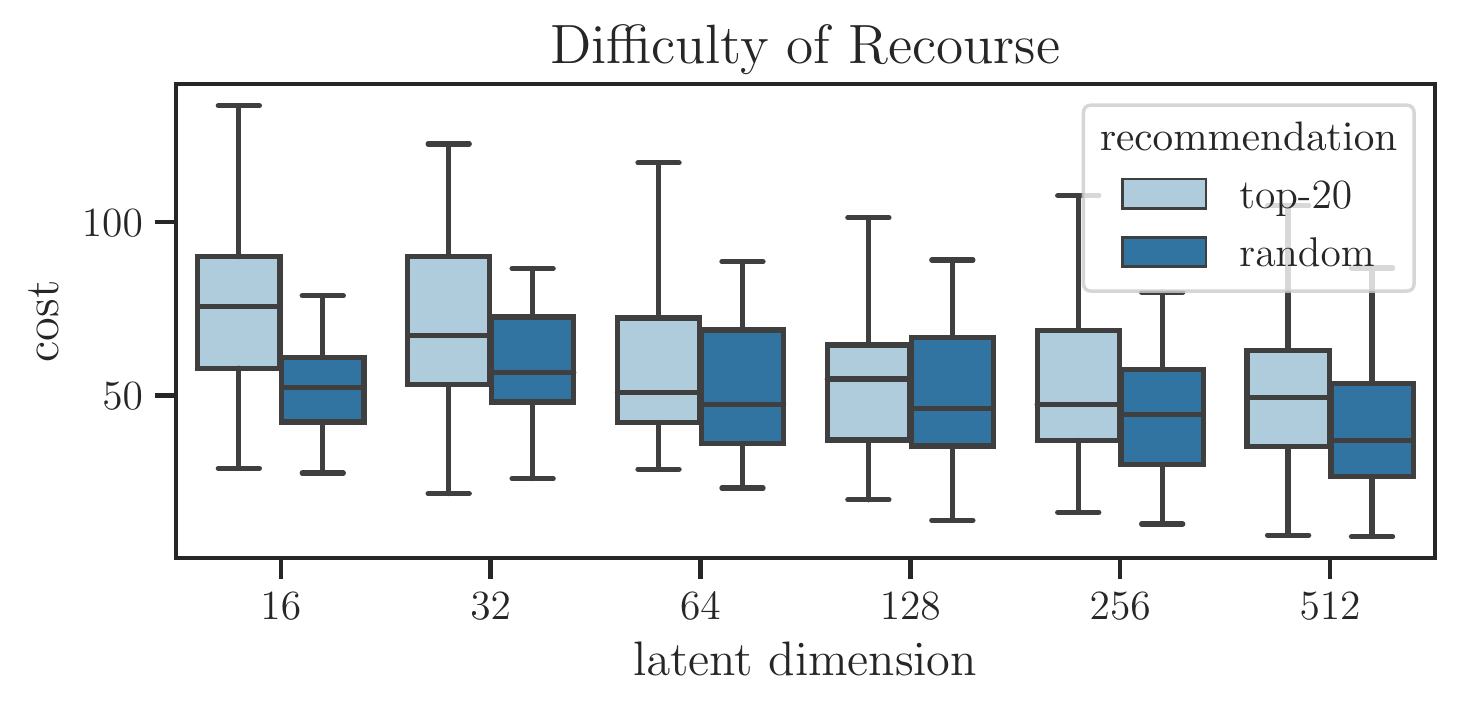}
\caption{
The difficulty of reaching a single item across 100 users for different sets of new times.
The difficulty of recourse does not increase for the larger models, despite the previously observed decrease in availability. 
Furthermore, we note that random items have lower difficulty.
 }\label{fig:user_next_reachable_difficulty}
\end{figure} \section{Discussion}

In this paper, we consider the effects of using predicted user preferences to recommend content, a practice prevalent throughout the Internet today. 
By defining a reachability problem for top-$N$ recommenders, we provide a way to evaluate the impact of using these predictive models to mediate the discovery of content. 
In applying these insights to linear preference models, 
we see several interesting phenomena. 
The first is simple but worth stating: good predictive models, when used to moderate information, can unintentionally make portions of the content library inaccessible to  users. This is illustrated in practice in our study of the MovieLens and LastFM datasets.

To some extent, the differences in the availability of items are
related to their unpopularity within training data. Popularity bias is
a well known phenomenon in which systems fail to personalize~\cite{steck2011item}, and
instead over-recommend the most popular pieces of content. Empirical
work shows connections between popularity bias and undesirable
demographic biases, including the under-recommendation
of female authors~\cite{ekstrand2018exploring}. 
YouTube was long known to have a popularity bias problem (known as the ``Gangnam Style Problem''), until the recommendations began optimizing for predicted ``watch
time'' over ``number of views.''
Their new model has been criticized for its radicalization and political polarization~\cite{zufecki,wsjyoutube}.
The choice of prediction target can have a large effect
on the types of content users can or are likely to discover,
motivating the use of analytic tools like the ones proposed
here to reason about these trade-offs before deployment.

While the reachability criteria proposed in this work form an important basis for reasoning about the availability of content within a recommender system, they do not guarantee less biased behavior on their own.
Many of the audits consider the feasibility of the recourse problem rather than its cost; thus confirming possible outcomes rather than distinguishing probable ones.
Furthermore, the existence of recourse does not fix problems of filter bubbles or toxic content.
Rather, it illuminates limitations inherent in recommender systems for organizing and promoting content.
There is an important distinction between technically providing recourse and the likelihood that people will actually avail themselves of it. 
If the cost function is not commensurate with actual user behavior this analysis may lend an appearance of fairness without substance.

With these limitations in mind, we mention several ways to extend the ideas presented in this work.
On the technical side, there are different models for rating predictions, especially those that incorporate implicit feedback or perform different online update strategies for users.
Not all simple models are linear--for example, subset based recommendations offer greater scrutability and thus user agency by design~\cite{balog2019transparent}.
Further more, top-$N$ recommendation is not the only option.
Post-processing approaches to the recommender policy $\pi$ could work with existing models to modify their reachability properties. 
For example, ~\citet{steck2018calibrated} proposed a method to ensure that the distribution of recommendations over genres remains the same despite model predictions.

One avenue for addressing more generic preference models and recommender policies is to extend the sampling perspective introduced in Section~\ref{sec:sufficient_sampling} to develop a general framework for black-box recommender evaluation.
By sampling with respect to a user transition model, the evaluation could incorporate notions of dynamics and user agency similar to those presented in this work.

Further future work could push the scope of the problem setting to understand the interactions between users and models over time.
Analyzing connections between training data and the resulting reachability properties of the model would to give context to empirical work showing how biases can be reproduces in the way items are recommended~\cite{ekstrand2018exploring,ekstrand2018all}. 
Similarly, directly considering multiple rounds of interactions between users and the recommendation systems would shed light on how these models evolve over time.
This is a path towards understanding phenomena like filter bubbles and polarization.

More broadly, we emphasize the importance of auditing systems with learning-based components in ways that directly consider the models' behavior when put into feedback with humans. 
In the field of formal verification, making guarantees about the behavior of complex dynamical systems over time has a long history.
There are many existing tools~\cite{asarin2000approximate}, though they are generally specialized to the case of physical systems and suffer from the curse of dimensionality.
We accentuate the abundance of opportunity for developing novel approximations and strategies 
for evaluating large scale machine learning systems. 

\section*{Acknowledgements}
Thanks to everyone at Canopy for feedback and support.
SD is supported by an NSF Graduate Research Fellowship under Grant No. DGE 175281.
BR is generously supported in part by ONR awards N00014-17-1-2191, N00014-17-1-2401, and N00014-18-1-2833, the DARPA Assured Autonomy (FA8750-18-C-0101) and Lagrange (W911NF-16-1-0552) programs, and an Amazon AWS AI Research Award.

\bibliographystyle{plainnat}
\bibliography{refs}

\ifcompileapp
\newpage
\appendix

\section{Full Results} \label{sec:app_bias}

Here, we develop our main results in full generality. 
First, we specify the full form of the linear setting.
We consider the preference model with bias terms:
\[\widehat r_{ui} =  \mu + b_i + c_u+\uf_u^\top \itf_i\:.\]
Here, $\vec b \in \R^m$ is a bias on each item, $\vec c \in \R^n$ is a bias for each user, and $\mu$ is the overall bias. 
In this setting the item-regions are now defined as:
\[\calP_i =\{(\itf_i-\itf_j)^\top \uf>b_j - b_i~\text{all but at most $N$ items }j\notin\Omega\}\:.\]
We define $G_i$ to be a $m-|\Omega|\times d$ matrix with rows given by $(\itf_i-\itf_j)$ for $j\notin\Omega$ and $h_i\in\mathbb{R}^{|\Omega^\comp|}$ to be the vector with entries given by $b_j - b_i$ for $j\notin\Omega$.

Lastly, we consider any model that updates only the user models online, and where this update is affine:
\[\uf_u = A\r_u + \vec d\:.\]
As developed in Section~\ref{sec:recourse_and_availability}, 
then a user's representation changes as a result of their actions
\begin{align*}
\uf &= A_{\Omega_0} \r_0 + A_{\Omega_m} \vec a + A\vec d =  \vec v_0 + B \vec a
\end{align*}
where we define $B = A_{\Omega_m}$ and  $\vec v_0 = A_{\Omega_0} \r_0 + A\vec d$.
Similar to before, the reachability problem becomes:
\begin{align}
\begin{split}\label{eq:linear_recourse}
\minimize_{\vec a \in \calR^{|\Omega_m|}} \quad &\mathrm{cost}([\r_0; \vec a]; \r_u)\\
\text{subject to}\quad& G_i (\vec v_0 + B\vec a) > h_i
\end{split}
\end{align}

\subsection{Examples of Linear Preference Models}

In this section, we outline several models that satisfy the form of linear preference model introduced above.
References can be found in chapters 2 and 3 of~\cite{ricci2011introduction}.

\begin{example}[Item-based neighborhood methods.]
In this model 
\[\widehat r_{ui} = \mu + c_u + b_i + \frac{\sum_{j\in\mathcal N_{i}(u)} w_{ij} (r_{uj} - \mu - c_u - b_j) }{ \sum_{j\in\mathcal N_{i}(u)} |w_{ij}| }\]
where $w_{ij}$ measures the similarity between item $i$ and $j$. Regardless of how these weights are defined, this fits into the linear preference model with $\uf_u=\r_u$, $b_i = b_i + \frac{\sum  w_{ij} b_j }{ \sum |w_{ij}| }$ and 
\[(\itf_i)_j\begin{cases} \frac{w_{ij}}{{ \sum_{j\in\mathcal N_{i}(u)} |w_{ij}| }} &j\in\mathcal N_{i}(u)\\0&\text{o.w.} \end{cases}\]
As long as we don't consider an update to $N_{i}(u)$, the update model holds with $A=I$.
\end{example}

\begin{example}[SLIM.]
Here, the model predicts
\[\widehat r_{ui} = \vec w_i^\top \r_u\]
where $\vec w_i$ are sparse row vectors learned via
\begin{align*}
\min_w~&\frac{1}{2}\|R - RW\|_F^2 + \frac{\beta}{2}\|W\|_F^2 + \lambda\|W\|_1\\
\text{s.t.}~&W\geq 0,\mathrm{diag}(W)=0
\end{align*}
Again, the update model holds with $A=I$.
\end{example}

\begin{example}[Matrix Factorization.]
The only modification from the body of the paper is in the update equation,
\begin{align*}
\uf = (Q_\Omega^\top Q_\Omega + \lambda I)^{-1}Q_\Omega^\top (\r_\Omega + \vec b_\Omega + c_u + \mu)\:.
\end{align*}
where $A = WQ_\Omega^\top$ and $\vec d = WQ_\Omega^\top(\vec b_\Omega + c_u + \mu)$.
\end{example}

\subsection{Main results}

We now restate the main results in this more general setting, and provide proofs.

\begin{proposition}[Result~\ref{res:convex_hull} with bias]
A top-$1$ item-region $i$ is empty if and only if there exists normalized weights $w\geq 0$ such that
\begin{align}
\itf_i = \sum_{\substack{j\neq i}}w_j \itf_j, \quad 
b_i \leq \sum_{\substack{j\neq i}}w_j b_j \:.
\end{align} 
The first expression states that the corresponding item factor $\itf_i$ contained within the convex hull of all item factors $j$, and the second requires that the corresponding weights satisfy a bias domination condition.
\end{proposition}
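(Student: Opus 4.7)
The plan is to recast the non-emptiness of the top-$1$ region $\calP_i$ as feasibility of a finite system of strict linear inequalities in $\uf$, and then apply a theorem of alternatives. With $\Omega = \emptyset$, $\calP_i$ is non-empty if and only if there exists $\uf \in \R^d$ with $G_i \uf > h_i$, where $G_i$ has rows $(\itf_i - \itf_j)^\top$ and $h_i$ has entries $b_j - b_i$ for $j \neq i$. So the problem reduces to characterizing infeasibility of a strict polyhedral system.

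The key step is to invoke a Gordan/Motzkin-type alternative for strict inequality systems: $G_i \uf > h_i$ is infeasible if and only if there exist weights $w \geq 0$, $w \neq 0$, with $G_i^\top w = 0$ and $h_i^\top w \geq 0$. I would derive this from Gordan's theorem by homogenizing, i.e., introducing a slack variable $t$ and applying Gordan's theorem to the combined strict system $G_i \uf - h_i t > 0$, $t > 0$; the non-triviality $w \neq 0$ combined with $w \geq 0$ then forces $\sum_j w_j > 0$, which is the one subtlety worth checking (without it, rescaling would be illegal). Translating the dual conditions back, $G_i^\top w = 0$ is exactly $(\sum_j w_j)\,\itf_i = \sum_{j \neq i} w_j \itf_j$, and $h_i^\top w \geq 0$ is exactly $\sum_{j \neq i} w_j b_j \geq (\sum_j w_j)\, b_i$. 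Rescaling $w$ so that its entries sum to $1$ yields the stated convex-combination and bias-domination conditions.

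The converse direction is short and does not require the alternative theorem. If normalized weights $w \geq 0$ satisfying both conditions exist, then for every candidate $\uf \in \R^d$,
\[
\sum_{j \neq i} w_j (\itf_i - \itf_j)^\top \uf \;=\; 0 \quad \text{and} \quad \sum_{j \neq i} w_j (b_j - b_i) \;\geq\; 0,
\]
so the strict inequalities $(\itf_i - \itf_j)^\top \uf > b_j - b_i$ cannot all hold simultaneously; hence $\calP_i$ is empty. The main obstacle is identifying and applying the correct form of the strict-inequality alternative, especially being careful that the non-triviality clause yields strictly positive total weight so that the normalization step is justified. Once this is in place, the remainder is algebraic bookkeeping.
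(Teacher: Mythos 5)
Your proof is correct, and it reaches the paper's certificate by a closely related but not identical route. The paper handles the strict system $G_i\uf > h_i$ by adding an $\epsilon$-slack to obtain a non-strict LP and then invoking LP duality: the primal is feasible iff the dual $\max\{(h_i+\epsilon)^\top\lambda : G_i^\top\lambda = 0,\ \lambda\ge 0\}$ is bounded, and unboundedness is equivalent to the existence of $\lambda\ge 0$ with $G_i^\top\lambda=0$ and $h_i^\top\lambda\ge 0$, which after normalization gives the convex-combination and bias-domination conditions. You instead homogenize with a slack variable $t$ and apply Gordan's theorem to the strict system directly; this avoids the $\epsilon$-limiting step (which the paper treats somewhat informally) and makes the nontriviality of the certificate explicit --- since $w\ge 0$ and $w\ne 0$ force $\sum_j w_j>0$, the normalization is legitimate, exactly the subtlety you flag (note that $w=0$ would force the slack multiplier to vanish too, so the certificate cannot be trivial). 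You also establish the easy direction (certificate implies emptiness of $\calP_i$) by a direct weighted-sum contradiction rather than through the alternative theorem, whereas the paper leaves that direction implicit in its chain of equivalences. Both arguments produce exactly the same dual characterization, so the difference is in the packaging of the theorem of alternatives rather than in substance; your version is somewhat more self-contained on the strict-inequality bookkeeping.
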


\begin{proof}
The item region is described by $G_i \uf > h_i$. We relate its non-emptiness to the feasibility of the linear program, for an arbitrarily small $\epsilon > 0$,
\begin{align*}
\min~&0^\top \uf\\
\text{s.t.}~&G_i \uf \geq h_i + \epsilon
\end{align*}
The dual of this program is
\begin{align*}
\max~& (h_i + \epsilon)^\top  \lambda \\
\text{s.t.}~&G_i^\top  \lambda = 0,\quad \lambda \geq 0
\end{align*}
The primal problem is feasible if and only if the dual is bounded.
In the dual problem, for any feasible $\lambda$, $c\lambda$ is also feasible for a scalar $c\geq 0$.
Then letting $c\to\infty$, we see that the dual objective is unbounded if and only if there are any feasible $\lambda$ for which $(h_i + \epsilon)^\top  \lambda > 0$.
Therefore, the unboundedness of the dual is equivalent to the existence of a $\lambda$ satisfying:
\begin{align*}
h_i^\top \lambda \geq 0,\quad 
\lambda \geq 0,\quad
G_i^\top  \lambda = 0 \:.
\end{align*} %
The previous statement can be written equivalently as the existence of some $w$ satisfying:
\begin{align*}
w\geq 0,\quad \sum_{\substack{j\neq i}}w_j = 1,\quad
\sum_{\substack{j\neq i}}w_jb_j  \geq b_i, \quad\sum_{\substack{j\neq i}}w_j \itf_j = \itf_i\:. 
\end{align*} 
\end{proof}

\begin{proposition}[Result~\ref{res:user_convex_hull} with bias]
Suppose that $\calR = \R$ and a user has control matrix $B$.
Then the top-$1$ reachability problem for $i$ is feasible if the corresponding multiplied item factor $B^\top \itf_i$ is a vertex of the convex hull of all multiplied item factors $j$. %

Furthermore, for matrix factorization, if $Q_{\Omega_m}$ has rank equal to the latent dimension of the model $d$, then then item availability implies user recourse.
\end{proposition}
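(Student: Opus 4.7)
The plan is to mirror the proof of the previous proposition, but applied to the reachability problem~\eqref{eq:linear_recourse} after collapsing the constraint onto the action variable~$\vec a$.

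First, the top-$1$ reachability constraint $G_i(\vec v_0 + B\vec a) > h_i$ can be rewritten as $(G_i B)\vec a > h_i - G_i\vec v_0$. This is again a system of strict linear inequalities, so I would dualize exactly as in the previous proof: the primal feasibility LP (with an $\epsilon$-perturbation to convert strict to non-strict inequalities) has a dual that is bounded if and only if there is no $\lambda \geq 0$ with $(h_i - G_i\vec v_0 + \epsilon)^\top \lambda > 0$ and $(G_iB)^\top \lambda = 0$. Infeasibility of the reachability problem is therefore equivalent to the existence of nonnegative weights~$w$, which after normalization sum to one, satisfying $B^\top\itf_i = \sum_{j\neq i} w_j\, B^\top\itf_j$ together with a bias-domination inequality involving $\vec v_0$. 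If $B^\top \itf_i$ is a vertex of the convex hull of the multiplied factors $\{B^\top \itf_j\}_{j\neq i}$, the first equation has no solution in such weights, and thus the reachability problem is feasible.

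For the second statement, specialize to matrix factorization. There $B = W Q_{\Omega_m}^\top$ with $W = (Q_\Omega^\top Q_\Omega + \lambda I)^{-1}$, which is symmetric and positive definite, hence invertible. Consequently $\mathrm{rank}(B) = \mathrm{rank}(Q_{\Omega_m})$, and the hypothesis that $Q_{\Omega_m}$ has rank $d$ makes $B$ a surjection $\R^{|\Omega_m|} \to \R^d$. In this regime, the affine map $\vec a \mapsto \vec v_0 + B\vec a$ covers all of $\R^d$, so every point of the latent space is realizable as some user factor. Item availability is exactly the statement that the item-region $\calP_i$ is nonempty; pick any $\uf^\star \in \calP_i$ and solve $B\vec a = \uf^\star - \vec v_0$ (possible by surjectivity) to produce a feasible point of~\eqref{eq:linear_recourse}.

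The main obstacle I anticipate is just bookkeeping in the first part: one has to track carefully how the bias vector $h_i$ and the anchor contribution $G_i \vec v_0$ recombine into the bias-domination inequality in the dual, and ensure the normalization step (dividing by $\sum_j w_j$) is valid, which requires checking that $w=0$ is excluded when we demand a strictly positive dual objective. Both points are handled by the same $\epsilon$-perturbation and homogeneity argument used in the previous proposition, so no new ideas are needed; the conclusion is that vertex-ness of $B^\top \itf_i$ already rules out the first dual condition, independently of the bias terms, which is why no bias-domination hypothesis needs to appear in the statement.
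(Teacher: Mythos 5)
Your proposal is correct. For the first statement you follow essentially the paper's own route: collapse the constraint onto $\vec a$, observe that the rows of $G_iB$ are $(B^\top\itf_i-B^\top\itf_j)^\top$ with right-hand terms $b_j-b_i-(\itf_i-\itf_j)^\top\vec v_0$, and re-run the LP-duality/Farkas argument of the preceding proposition; as you note, vertex-ness of $B^\top\itf_i$ already rules out the convex-combination equation regardless of the bias terms, which is exactly how the paper concludes. For the second statement, however, you take a genuinely different and in fact more direct route. The paper works through $B^\top$: since $W$ is invertible, $\mathrm{rank}(B)=\mathrm{rank}(Q_{\Omega_m})=d$, so $B^\top$ is injective and $B^\top\itf_i\in\mathrm{conv}(\{B^\top\itf_j\})$ iff $\itf_i\in\mathrm{conv}(\{\itf_j\})$; availability, via the Result~\ref{res:convex_hull} characterization, then transfers to the multiplied factors, where one must also note that the bias-domination conditions coincide because $\itf_i=\sum_j w_j\itf_j$ forces $\itf_i^\top\vec v_0=\sum_j w_j\,\itf_j^\top\vec v_0$ (a step the paper leaves implicit). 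You instead use the same rank fact to make the affine map $\vec a\mapsto\vec v_0+B\vec a$ surjective onto $\R^d$ (legitimate precisely because $\calR=\R$), and lift any witness point $\uf^\star\in\calP_i$ of the nonempty item region to a feasible action by solving $B\vec a=\uf^\star-\vec v_0$. Your version needs neither the vertex characterization of availability nor any bias bookkeeping, and applies verbatim to the general biased region $\{\uf:\,G_i\uf>h_i\}$; the paper's version has the modest virtue of staying entirely within the convex-hull language of the first statement. Both arguments are sound and yield the same conclusion.
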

\begin{proof}
We follow the argument for the previous result considering the alternate linear region
$G_i B \vec a > h_i - G_i \vec v_0$.
In this case, the linear coefficients are given by $(B^\top\itf_i-B^\top\itf_j)^\top $, while right hand terms are $b_j - b_i - \itf_i^\top \vec v_0+\itf_j^\top \vec v_0$.
Then feasibility is equivalent to the lack of existence normalized weights $w\geq 0$ such that
\begin{align}
B^\top \itf_i = \sum_{\substack{j\neq i}}w_j B^\top \itf_j, \quad 
b_i+\itf_i^\top \vec v_0 \leq \sum_{\substack{j\neq i}} w_j(b_j+\itf_j^\top \vec v_0)\:.
\end{align} 
A sufficient condition is therefore given by $B^\top \itf_i$ not being contained within the convex hull of $\{B^\top \itf_j\}$, so the first statement follows.

The second statement follows because for matrix factorization, $B=WQ_{\Omega_m}^\top$ has the same rank as $Q_{\Omega_m}$ because $W$ is invertible.
If $B$ has rank $d$, then its left inverse exists, so 
$$B^\top \itf_i\in\mathrm{conv}(\{B^\top \itf_j\}) \iff \itf_i\in\mathrm{conv}(\{\itf_j\})$$
\end{proof}

Recall that $\Pi_B$ is the projection matrix onto the subspace spanned by $B$, $\Pi_B = B(B^\top B)^{-1} B$, while $\Pi_B^\perp$ is the projection onto the orthogonal complement, $\Pi_B^\perp = I - B(B^\top B)^{-1} B$.
For this result we define an aligned reachability condition for item $i$,
\begin{align}\label{eq:assumption}
\proj_B \itf_i + \proj_B^\perp \vec v_0  \in\calP_i\:.
\end{align}

\begin{theorem}[Result \ref{res:cost_bound} with Bias]
Suppose that $\calR = \R$, and assume the aligned reachability condition~\eqref{eq:assumption} holds
for all reachable items $i$. 

Let $\uf_u$ indicate the user's latent factor before any actions are taken or the next set of recommendations are added to the user history: 
\[\vec p_u= (Q_{\Omega_u}^\top Q_{\Omega_u} + \lambda I)^{-1}Q_{\Omega_u}^\top (\r_{u, \Omega_u} + \vec b_{\Omega_u} + c_u + \mu) \:.\]
Then we have the bound on the \emph{difficulty} of recourse. 
 In the case of full history edits ,
\[\text{difficulty of recourse for user $u$} \leq \|B^\dagger\| \cdot \frac{1}{|\Omega^\comp|}\sum_{i\in\Omega^\comp} \|\itf_i-\uf_u\| \:.\]
And in the case of reactions,
\[\text{difficulty of recourse for user $u$} \leq \|B^\dagger\| \cdot \frac{1}{|\Omega^\comp|}\sum_{i\in\Omega^\comp} \|\itf_i-(\uf_u+\uf_b)\| \:.\]
where $\uf_b = W Q_{\Omega_m}^\top (\vec b_{\Omega_m} + c_u + \mu)$ represents the effect of the bias of new items on the user factor.
\end{theorem}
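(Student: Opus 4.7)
The plan is to convert the difficulty bound into the cost of an explicit feasible action for the reachability problem~\eqref{eq:linear_recourse} for each reachable item $i$, and then average. The aligned reachability assumption~\eqref{eq:assumption} supplies a specific feasible point $\uf^*_i := \proj_B\itf_i + \proj_B^\perp \vec v_0 \in \calP_i$, and since $\uf^*_i - \vec v_0 = \proj_B(\itf_i - \vec v_0)$ lies in the column space of $B$, the set of actions $\vec a$ with $\vec v_0 + B\vec a = \uf^*_i$ is a non-empty affine subspace on which the cost is to be minimized. Each item-wise cost bound then feeds into the average over $\Omega_r$ to yield the claim.

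\textbf{History edits.} Here $\Omega_0 = \emptyset$, so expansion of $\vec d$ gives $\vec v_0 = WQ_{\Omega_u}^\top(\vec b_{\Omega_u} + c_u + \mu)$, which already lies in the column space of $B = WQ_{\Omega_u}^\top$; hence $\proj_B^\perp \vec v_0 = 0$ and $\uf_u = \vec v_0 + B\r_{u,\Omega_u}$ also lies in that column space. Parameterize the action as $\vec a = \r_{u,\Omega_u} + \vec\delta$, which makes the cost equal to $\|\vec\delta\|$; the constraint simplifies via $B\r_{u,\Omega_u} = \uf_u - \vec v_0$ and $\proj_B \uf_u = \uf_u$ to $B\vec\delta = \proj_B(\itf_i - \uf_u)$. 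The minimum-norm solution $\vec\delta = B^\dagger(\itf_i - \uf_u)$ then satisfies $\|\vec\delta\| \leq \|B^\dagger\|\,\|\itf_i - \uf_u\|$, and averaging over reachable items yields the history-edits bound.

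\textbf{Reactions.} Here $\Omega_0 = \Omega_u$ and $\Omega_m = \pi(\r_u)$, and a direct expansion gives $\vec v_0 = \uf_u + \uf_b$ with $\uf_b = WQ_{\Omega_m}^\top(\vec b_{\Omega_m} + c_u + \mu)$; this is exactly the anchor appearing in the stated bound. Parameterize $\vec a = \widehat\r_{\Omega_m} + \vec\delta$ so that the cost equals $\|\vec\delta\|$; the target constraint becomes $B\vec\delta = \proj_B(\itf_i - \vec v_0) - B\widehat\r_{\Omega_m}$, and the goal is to exhibit $\vec\delta$ with $\|\vec\delta\| \leq \|B^\dagger\|\,\|\itf_i - \vec v_0\|$. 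Averaging then gives the reactions bound.

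\textbf{Main obstacle.} The clean cancellation that makes the history-edits argument work relies on the identity $B\r_{u,\Omega_u} = \uf_u - \vec v_0$, which has no exact counterpart for reactions because adjoining $\Omega_m$ to the observed support alters the regularized normal equations defining $\uf_u$. The crux is therefore controlling the extra term $B\widehat\r_{\Omega_m}$ on the right-hand side: I expect to apply the identity $Q_{\Omega_m}^\top Q_{\Omega_m} = W^{-1} - \lambda I - Q_{\Omega_u}^\top Q_{\Omega_u}$ to rewrite $WQ_{\Omega_m}^\top Q_{\Omega_m}\uf_u$ in terms of the model's training residual $\r_{u,\Omega_u} - \widehat\r_{u,\Omega_u}$ on the user's history and absorb it into the anchor point, or else to select a non-minimum-norm $\vec\delta$ engineered so that the $B\widehat\r_{\Omega_m}$ contribution cancels against part of $\proj_B(\itf_i - \vec v_0)$. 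Once this reduction is in place, the pseudoinverse bound goes through exactly as in the history-edits case, and averaging over $\Omega_r$ closes the proof.
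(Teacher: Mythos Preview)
Your history-edits argument is correct and matches the paper's proof (the paper chooses $\vec a = B^\dagger(\itf_i-\vec v_0) + (I-B^\dagger B)\r_{u,\Omega_u}$, which is exactly your $\r_{u,\Omega_u}+\vec\delta$ with $\vec\delta = B^\dagger(\itf_i-\uf_u)$).

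For reactions there is a concrete error. The claim ``a direct expansion gives $\vec v_0 = \uf_u + \uf_b$'' is false: in the reactions setting the normal-equations matrix is $W = (Q_\Omega^\top Q_\Omega + \lambda I)^{-1}$ for the \emph{enlarged} support $\Omega = \Omega_u\cup\Omega_m$, so
\[
\vec v_0 \;=\; W\,Q_{\Omega_u}^\top\!\bigl(\r_{u,\Omega_u}+\vec b_{\Omega_u}+c_u+\mu\bigr) \;+\; \uf_b,
\]
whereas $\uf_u$ is defined with $W_0 = (Q_{\Omega_u}^\top Q_{\Omega_u}+\lambda I)^{-1}$. These do not coincide. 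Because of this, your stated goal ``$\|\vec\delta\|\le\|B^\dagger\|\,\|\itf_i-\vec v_0\|$'' would not yield the theorem even if you achieved it.

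The fix is that the very identity you propose, $Q_{\Omega_m}^\top Q_{\Omega_m} = W^{-1}-W_0^{-1}$, gives (with $\widehat\r_{\Omega_m}=Q_{\Omega_m}\uf_u$)
\[
B\widehat\r_{\Omega_m} \;=\; W Q_{\Omega_m}^\top Q_{\Omega_m}\uf_u \;=\; \uf_u - W W_0^{-1}\uf_u \;=\; \uf_u - W Q_{\Omega_u}^\top\!\bigl(\r_{u,\Omega_u}+\vec b_{\Omega_u}+c_u+\mu\bigr),
\]
and hence the correct identity is
\[
\vec v_0 + B\widehat\r_{\Omega_m} \;=\; \uf_u + \uf_b.
\]
This is exactly how the paper proceeds. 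With it, there is no remaining ``obstacle'': your minimum-norm solution $\vec\delta = B^\dagger\bigl(\proj_B(\itf_i-\vec v_0)-B\widehat\r_{\Omega_m}\bigr) = B^\dagger\bigl(\itf_i - (\vec v_0+B\widehat\r_{\Omega_m})\bigr)$ immediately gives $\|\vec\delta\|\le\|B^\dagger\|\,\|\itf_i-(\uf_u+\uf_b)\|$, and averaging closes the proof. So the $B\widehat\r_{\Omega_m}$ term is not an extra error to be ``absorbed'' or ``cancelled''; it is precisely the correction that converts $\vec v_0$ into $\uf_u+\uf_b$.
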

\begin{proof}
We begin in the case of history edits. Here, we have that
\[\mathrm{cost}(\vec a; \r_u) = \|\vec a - \r_{u, \Omega_u}\|\]
By assumption, $\vec a = B^\dagger (\itf_i - \vec v_0) + (I-B^\dagger B)\r_{u, \Omega_u}$ is a feasible point, which we select because it is a minimizer of $\vec a \in\arg\min \|B\vec a+\vec v_0-\itf_i\|^2_2 $.
Then we can write
\begin{align*}
\mathrm{cost}(\vec a; \r_u) &\leq \|B^\dagger (\itf_i - \vec v_0) - B^\dagger B\r_{u, \Omega_u}\|\\
&=\|B^\dagger (\itf_i -  W Q_{\Omega_u}^\top (\vec b_{\Omega_u} + c_u + \mu)) - W Q_{\Omega_u}^\top\r_{u, \Omega_u})\|\\
&\leq \|B^\dagger\| \cdot \| \itf_i -  W Q_{\Omega_u}^\top (\r_{u, \Omega_u} + \vec b_{\Omega_u} + c_u + \mu)) \|
\end{align*}
By definition of $\vec p_u$ we have shown the result.

Now we consider the case of new reactions. Here, we have that
\[\mathrm{cost}(\vec a, \r_u) = \|\vec a - \widehat \r_{\Omega_m}\| = \|\vec a - Q_{\Omega_m} (Q_{\Omega_0}^\top Q_{\Omega_0} + \lambda I)^{-1} Q_{\Omega_0}^\top \r_0\|\:.\]
As before, we chose 
$\vec a = B^\dagger (\itf_i - \vec v_0) + (I-B^\dagger B)\widehat \r_{\Omega_m}$,
\begin{align*}
\mathrm{cost}(\vec a; \r_u) \leq \|B^\dagger (\itf_i - \vec v_0-W Q_{\Omega_m}^\top \widehat \r_{\Omega_m}) \|
\end{align*}
Then recall that
\[\widehat \r_{\Omega_m} = Q_{\Omega_m} (Q_{\Omega_0}^\top Q_{\Omega_0} + \lambda I)^{-1} Q_{\Omega_0}^\top (\r_0 + \vec b_{\Omega_0} + c_u + \mu)\]
Furthermore, 
\begin{align*}
\vec v_0 &= W Q_{\Omega_0}^\top \r_0  + W Q_\Omega^\top (\vec b_\Omega + c_u + \mu)\\
&= W Q_{\Omega_0}^\top (\r_0+\vec b_{\Omega_0} + c_u + \mu)  + W Q_{\Omega_m}^\top (\vec b_{\Omega_m} + c_u + \mu)\\
\end{align*}
Then letting
$W_0 = (Q_{\Omega_0}^\top Q_{\Omega_0} + \lambda I)^{-1}$,
\begin{align*}
\vec v_0+W Q_{\Omega_m}^\top \widehat \r_{\Omega_m} = &W (I + Q_{\Omega_m}^\top Q_{\Omega_m} W_0)Q_{\Omega_0}^\top (\r_0 + \vec b_{\Omega_0} + c_u + \mu)\\
&+ W Q_{\Omega_m}^\top (\vec b_{\Omega_m} + c_u + \mu)
\end{align*}
Then we notice that
\begin{align*}
(I + Q_{\Omega_m}^\top Q_{\Omega_m} W_0) &= (W_0^{-1} + Q_{\Omega_m}^\top Q_{\Omega_0})W_0\\
&=(Q_{\Omega_0}^\top Q_{\Omega_0} + \lambda I + Q_{\Omega_m}^\top Q_{\Omega_m})W_0 =W^{-1}W_0
\end{align*}
Therefore, we conclude that
\[\vec v_0+W Q_{\Omega_m}^\top \widehat \r_{\Omega_m}=\uf_{u}+ W Q_{\Omega_m}^\top (\vec b_{\Omega_m} + c_u + \mu)
\]
And we arrive at the bound
\begin{align*}
\mathrm{cost}(\vec a; \r_u) &\leq \|B^\dagger (\itf_i - (\uf_{u} + \uf_b))\|
\end{align*}
\end{proof}

 \begin{proposition}[Result~\ref{res:item_aligned} with bias]
The item-region $\calP_i$ is nonempty if
\begin{align}
\delta_i = \|\itf_i\|_2^2 + b_i - {\maxn_{j\notin \Omega\cup\{i\}}}~ \left(\itf_j^\top \itf_i + b_j\right) > 0\:. \label{eq:aligned_reach_condition_bias}
\end{align} 
The percent of aligned-reachable items lower bounds the percent of baseline-reachable items.
\end{proposition}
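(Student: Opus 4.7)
The plan is to prove nonemptiness of $\calP_i$ by exhibiting an explicit feasible point rather than reasoning about the polytope structure directly. The natural candidate to try is $\uf = \itf_i$, the item's own latent factor; this is the same sampling strategy that motivated the definition of aligned-reachability, and it makes the defining constraints collapse to a clean condition on the inner products $\itf_j^\top \itf_i$.

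First I would substitute $\uf = \itf_i$ into the definition of $\calP_i$. The constraint $(\itf_i - \itf_j)^\top \itf_i > b_j - b_i$ rearranges to $\|\itf_i\|_2^2 + b_i > \itf_j^\top \itf_i + b_j$, so verifying $\itf_i \in \calP_i$ reduces to showing this scalar inequality holds for all but at most $N$ indices $j \notin \Omega \cup \{i\}$.

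Next I would unpack the hypothesis $\delta_i > 0$ via the $\maxn$ operator. By definition, $\maxn_{j \notin \Omega \cup \{i\}}(\itf_j^\top \itf_i + b_j)$ is the $N$-th largest element of this set, so the strict inequality $\|\itf_i\|_2^2 + b_i > \maxn_{j \neq i}(\itf_j^\top \itf_i + b_j)$ implies that at most $N - 1$ indices attain a value $\geq \|\itf_i\|_2^2 + b_i$. Equivalently, at most $N - 1 \leq N$ items violate the required strict inequality, which satisfies the ``all but at most $N$'' criterion. Thus $\itf_i \in \calP_i$, so $\calP_i$ is nonempty. The bound on availability is then immediate: since aligned-reachability is a sufficient condition for $\calP_i$ to be nonempty (i.e., for $i$ to be reachable by some user factor), the fraction of aligned-reachable items is a lower bound on the fraction of reachable items.

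The main subtlety is only bookkeeping around whether the $\maxn$ threshold yields a strict or non-strict count; because the hypothesis $\delta_i > 0$ is strict, no tie-breaking argument is needed. There is no deeper obstacle — all the work is hidden in the felicitous choice of sample point $\uf = \itf_i$, which turns what would otherwise require analyzing an intersection of $\mathcal{O}(m^N)$ polytopic cones into a single scalar ranking check.
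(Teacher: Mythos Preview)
Your proposal is correct and follows exactly the same approach as the paper: exhibit $\uf = \itf_i$ as an explicit member of $\calP_i$ when $\delta_i > 0$, then observe that sufficiency implies the lower bound. The paper's own proof is a single sentence (``When the condition is true, $\itf_i \in \calP_i$ so it is nonempty''), so your version is simply a more detailed unpacking of the same idea, including the $\maxn$ bookkeeping that the paper leaves implicit.
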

\begin{proof}
When the condition is true, $\itf_i \in \calP_i$ so it is nonempty.
Because this is only a sufficient condition, the number of aligned-reachable items is less than or equal to the number of reachable items.
\end{proof}

\begin{proposition}
When $\calR = \R$, the reachability problem~\eqref{eq:general_recourse} is feasible if
\begin{align}
\begin{split}
\label{eq:user_reach_suff_bias}
\|\proj_B\itf_i\|_2^2 &+ \itf_i^\top \proj_B^{\perp}\vec v_0 + b_i
\\&- {\max_{j\neq i}}^{(N)}~ \left(\itf_j^\top 
\proj_B\itf_i + \itf_j^\top \proj_B^\perp\vec v_0 + b_j \right) > 0\:.
\end{split}
\end{align}
\end{proposition}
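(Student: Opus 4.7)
The plan is to exhibit a specific feasible point for the reachability problem~\eqref{eq:linear_recourse} whose existence is implied by the stated inequality, mirroring the proof strategy of Result~\ref{res:item_aligned} but lifted through the affine parameterization $\uf = \vec v_0 + B\vec a$. Since $\calR = \R$, any $\vec a \in \R^{|\Omega_m|}$ is admissible, so feasibility reduces to producing one action $\vec a_i$ such that $i \in \pi(\vec v_0 + B\vec a_i; \Omega)$.

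Following the suggestion in Section~\ref{sec:user_results}, I would take $\vec a_i \in \arg\min_{\vec a} \|B\vec a + \vec v_0 - \itf_i\|_2^2$. By the normal equations, the image $B\vec a_i$ is the orthogonal projection of $\itf_i - \vec v_0$ onto the range of $B$, i.e.\ $B\vec a_i = \proj_B(\itf_i - \vec v_0)$, so the induced user factor is
\[
\uf \;=\; \vec v_0 + B\vec a_i \;=\; \proj_B \itf_i + \proj_B^\perp \vec v_0\:.
\]
This is the natural ``aligned'' candidate: it matches $\itf_i$ in the directions the user can control, and sits at $\vec v_0$ in the directions they cannot.

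Next I would substitute this $\uf$ into the predicted ratings. Because $\proj_B$ is symmetric and idempotent, $\itf_i^\top \proj_B \itf_i = \|\proj_B \itf_i\|_2^2$, so
\[
\widehat r_{ui} = \|\proj_B\itf_i\|_2^2 + \itf_i^\top \proj_B^\perp \vec v_0 + b_i,\qquad
\widehat r_{uj} = \itf_j^\top \proj_B \itf_i + \itf_j^\top \proj_B^\perp \vec v_0 + b_j\:,
\]
after absorbing the user- and overall-bias terms into both predictions (they cancel in the pairwise comparison). The hypothesis~\eqref{eq:user_reach_suff_bias} then reads exactly $\widehat r_{ui} > \maxn_{j\neq i} \widehat r_{uj}$, which means at most $N-1$ items outrank $i$, so $i \in \pi(\uf;\Omega)$. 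Thus $\vec a_i$ is a feasible point for~\eqref{eq:linear_recourse}, proving the claim.

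The main subtlety is purely notational: verifying that $\proj_B(\itf_i-\vec v_0)$ is genuinely $B\vec a_i$ regardless of whether $B$ has full column rank (it is, because the least-squares minimizer lands on the projection onto $\mathrm{range}(B)$), and checking that the comparison $\widehat r_{ui} - \widehat r_{uj}$ eliminates the shared bias contributions so that only $b_i - b_j$ remains. Neither step requires the alignment assumption~\eqref{eq:assumption} used in Result~\ref{res:cost_bound}, since here we only assert feasibility and not a cost bound.
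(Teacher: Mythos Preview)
Your proposal is correct and follows essentially the same approach as the paper: both select the least-squares action $\vec a_i$ minimizing $\|B\vec a+\vec v_0-\itf_i\|_2^2$, compute the resulting user factor $\uf=\proj_B\itf_i+\proj_B^\perp\vec v_0$, and observe that condition~\eqref{eq:user_reach_suff_bias} is precisely the statement $\uf\in\calP_i$. Your remark that only the image $B\vec a_i$ matters (so full column rank of $B$ is unnecessary) is a nice clarification the paper leaves implicit by writing $\vec a_i=B^\dagger(\itf_i-\vec v_0)$.
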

\begin{proof}
If $\calR=\R$, we have the test point
\begin{align*}
\vec a_i &\in \arg\min \|B\vec a+\vec v_0-\itf_i\|^2_2 \\
&:= B^\dagger(\itf_i-\vec v_0)
\end{align*}
Then verifying its feasibility:
\begin{align*}
\itf_i^\top (\vec v_0 + B\vec a_i)+b_i&=\itf_i^\top (\vec v_0 + BB^\dagger(\itf_i-\vec v_0))+b_i\\
&=\itf_i^\top \proj_B^\perp \vec v_0 + \|\proj_B \itf_i\|_2^2+b_i\\
&> {\max_{j\neq i}}^{(N)}~ \left(\itf_j^\top 
\proj_B\itf_i + \itf_j^\top \proj_B^\perp\vec v_0 +b_j \right)\\
&= {\max_{j\neq i}}^{(N)}~ \left(\itf_j^\top (\vec v_0 + B\vec a_i) +b_j\right)
\end{align*}
where the inequality follows from the property~\eqref{eq:user_reach_suff_bias}, so
we have that $\vec v_0 + B\vec a_i \in\calP_i$.
\end{proof}

\newpage
\section{Experiments with Additional Dataset} \label{sec:experiments_app}

In this section, we present results on an additional dataset
We use 
the LastFM\footnote{\url{Last.fm}} 1K dataset~\cite{Celma:Springer2010}.
The dataset contains records user play counts 992 users with nearly 1 million play counts of songs by 177,023 artists.
We aggregate listens by artist, and remove artists with less than 50 total listens, resulting in a smaller dataset with 23,835 artists, 638,677 datapoints, and the same number of users.
We further transform the number of listens with $\log(1+x)$.
We plot the distribution of ratings in the MovieLens dataset and log-listens in this dataset in Figure~\ref{fig:data-dist}.

\begin{figure}
\center
\includegraphics[width=\fullsize\figwidthbase]{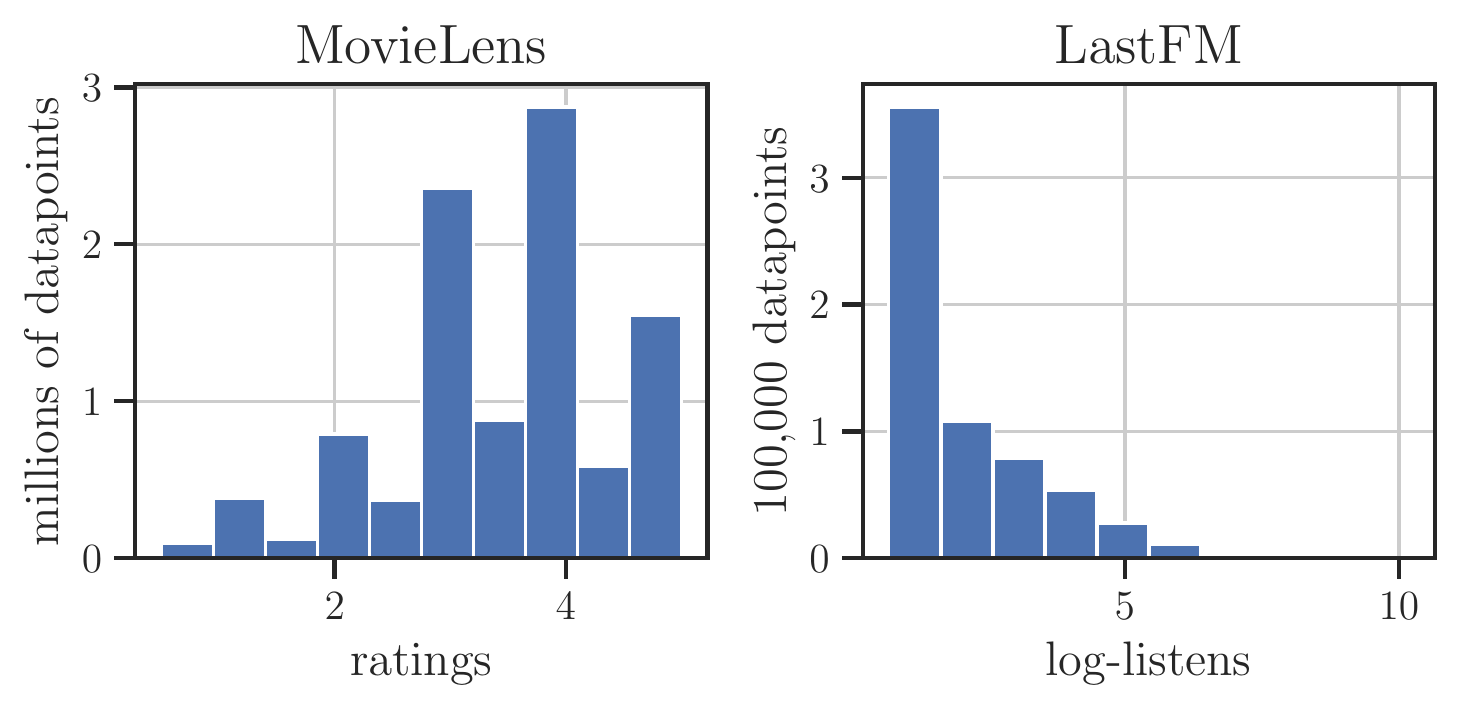}
\caption{Distribution of prediction targets for MovieLens and LastFM datasets (ratings and log-listens, respectively).}
\label{fig:data-dist}
\end{figure}

As with the MovieLens data, we train a regularized matrix factorization model
and examine models of a variety of latent dimension ranging from $d=16$ to $d=512$. 
The models were trained using libfm,
optimized using SGD with regularization parameter $\lambda=0.08$ and step size $0.001$ for $128$ epochs on $90\%$ of the data, verifying accuracy on the remaining $10\%$ with a random global test/train split.

\begin{figure}
\center
\includegraphics[width=\fullsize\figwidthbase]{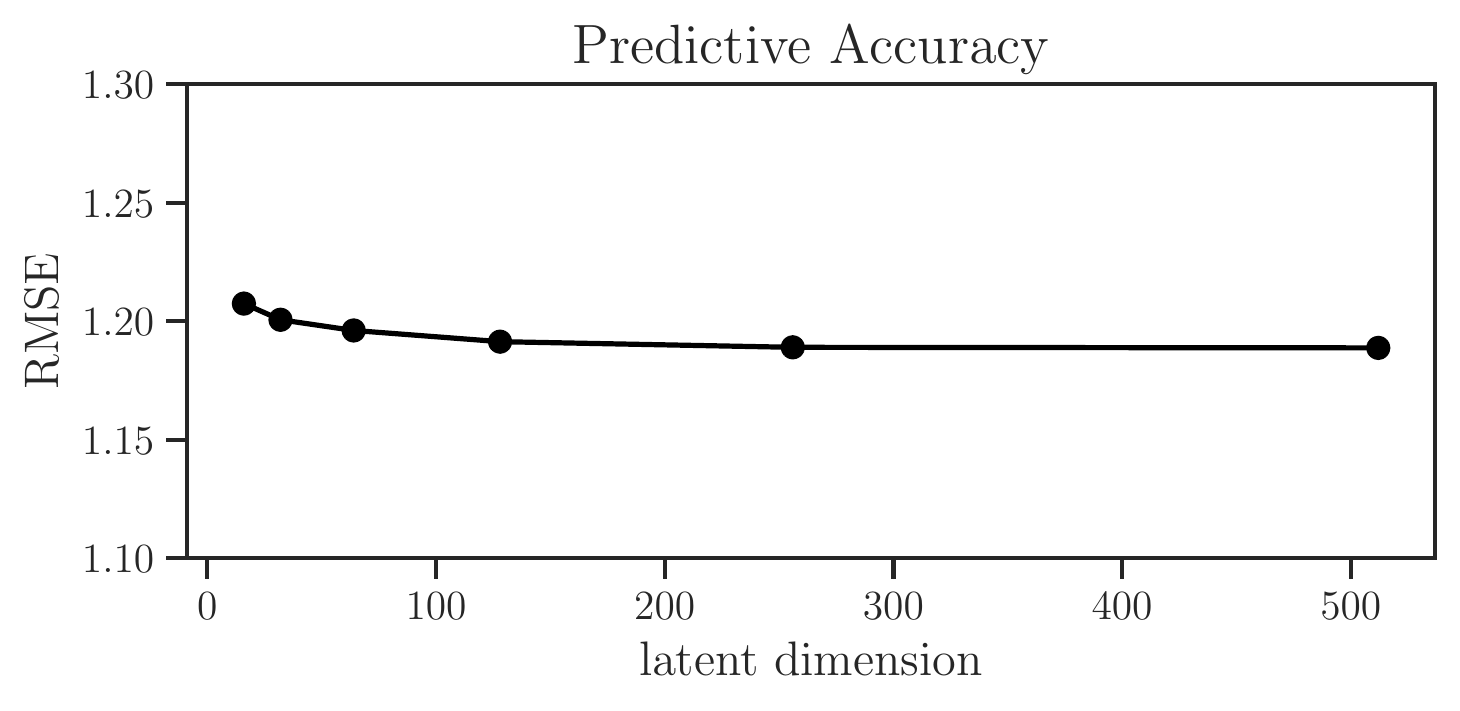}
\caption{The test RMSE of the matrix factorization models on the LastFM dataset.}\label{fig:lf_model_acc}
\end{figure}

Figure~\ref{fig:lf_item_v_N} displays the result of the item-based audit.
Few of the items are baseline-reachabile for models with smaller latent dimension; this trend is more exaggerated than with the MovieLens dataset. 
In Figure~\ref{fig:lf_rating_distributions}, we compare the popularity of the available and unavailable items (for $N=20$) in the training set on these measures.
As before, unavailable items tend have systematically lower popularity, but the trend is less consistent than it was for MovieLens.
This may be due to the more heavily skewed target distribution (Figure~\ref{fig:data-dist}).

\begin{figure}
\center
\includegraphics[width=\fullsize\figwidthbase]{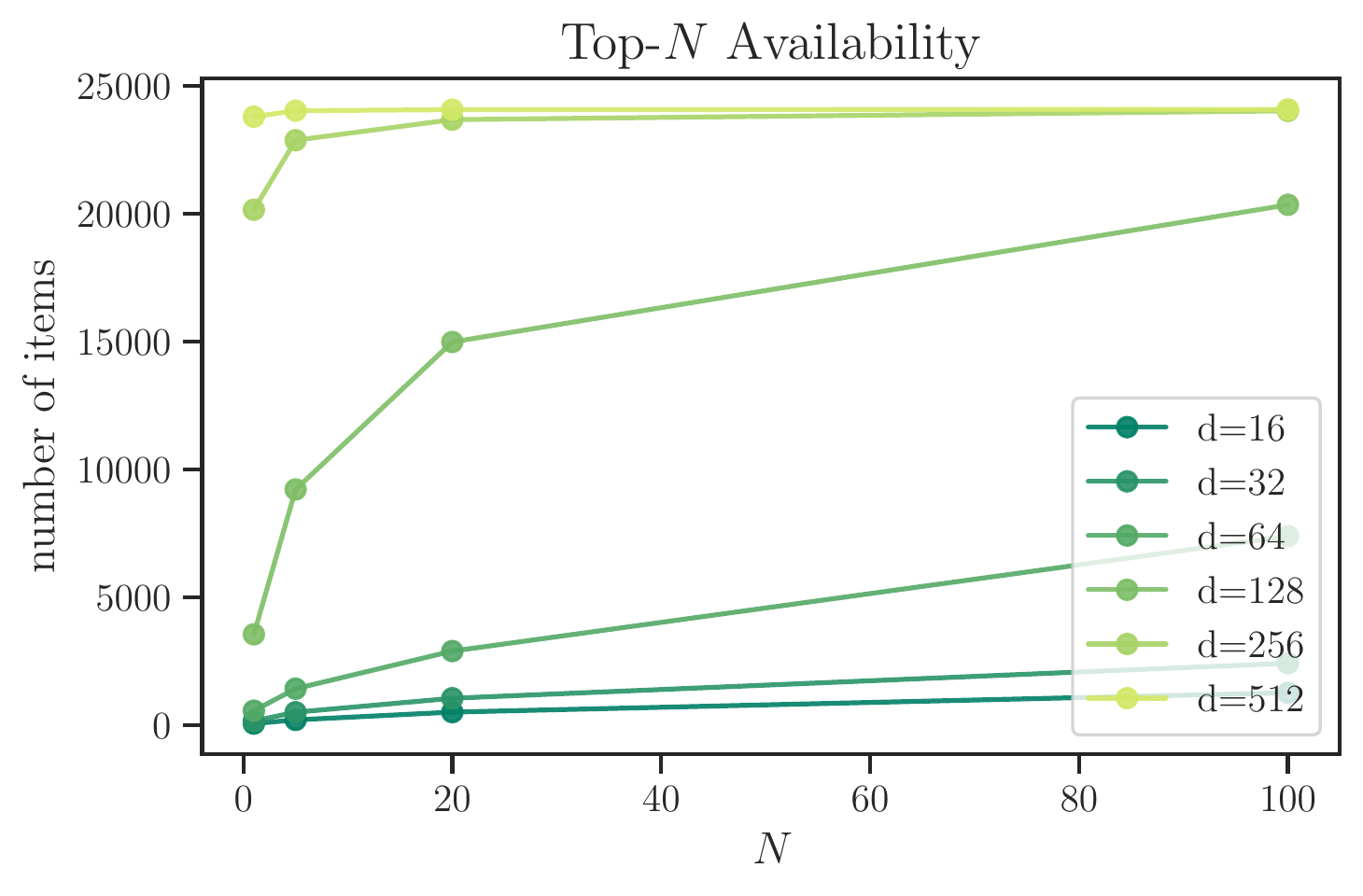}
\caption{Only some of the 23,835 total artists are aligned-reachable, especially for models with smaller complexity and for smaller recommendation set sizes $N$.}\label{fig:lf_item_v_N}
\end{figure}

\begin{figure}
\center
\includegraphics[width=\fullsize\figwidthbase]{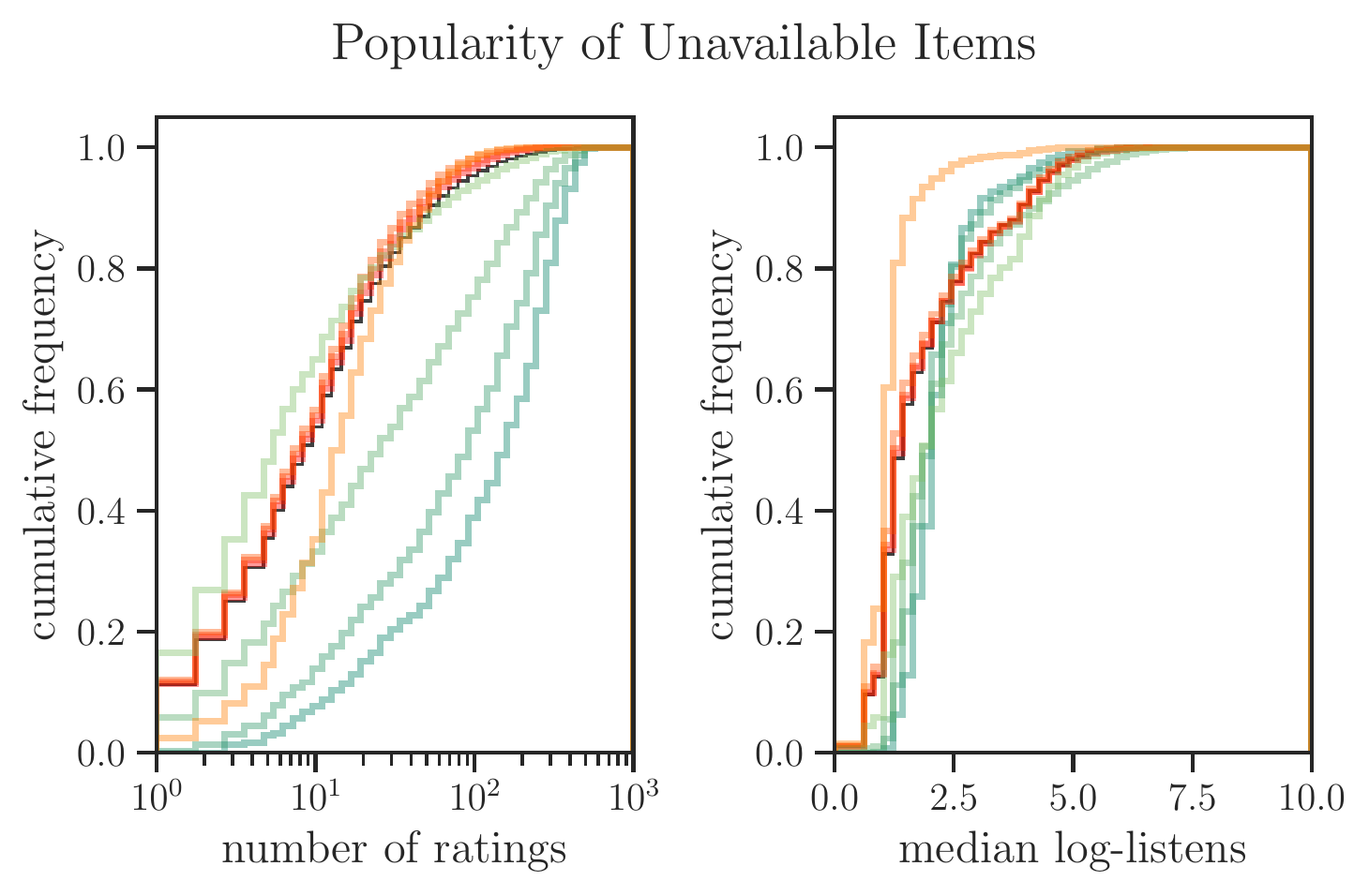}
\includegraphics[width=\fullsize\figwidthbase]{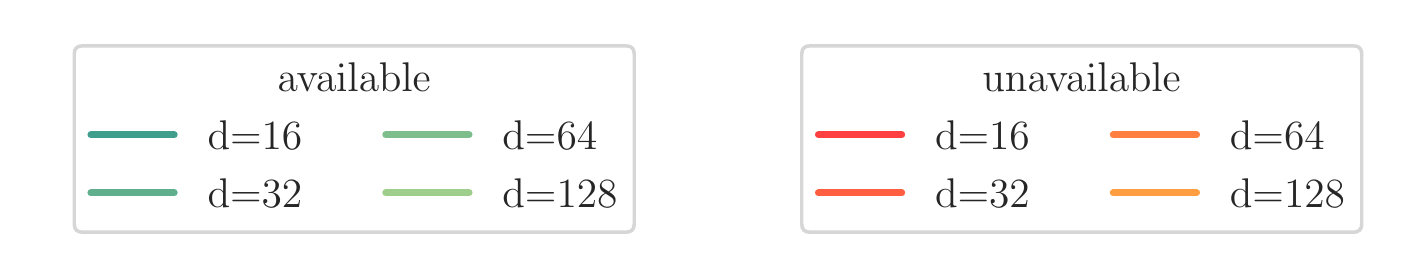}
\caption{
Unavailable items tend to be less popular than available items.
Each curve represents the cumulative density function (CDF) of the popularity measure within the available (green) and unavailable (red) items.
The black line represents the CDF of the combined population.
This trend is true for models of varying complexity. 
}\label{fig:lf_rating_distributions}
\end{figure}

Next, we examine the users in this dataset, combining testing and training data to determine user ratings $\vec r_u$ and histories $\Omega_u$.
For this section, we examine 100 randomly selected users and only the 1,000 most popular items and allow ratings on the continuous interval $\calR = [0,5]$.

Figure~\ref{fig:lf_user_hist_reachable} shows the amount of recourse that the system provides to users via history edits.
We observe is a similar relationship with the length of user history as with the MovieLens data.
Figure~\ref{fig:lf_user_next_reachable} displays the amount of recourse via reactions for random and recommended items.
The previously observed trends are less definitive, but present: smaller models tend to offer more recourse, as do random recommendations.
Furthermore, the bottom panels suggest a negative relationship between recourse and history length, also less definitively than in the MovieLens data.

\begin{figure}
\center
\includegraphics[width=\fullsize\figwidthbase]{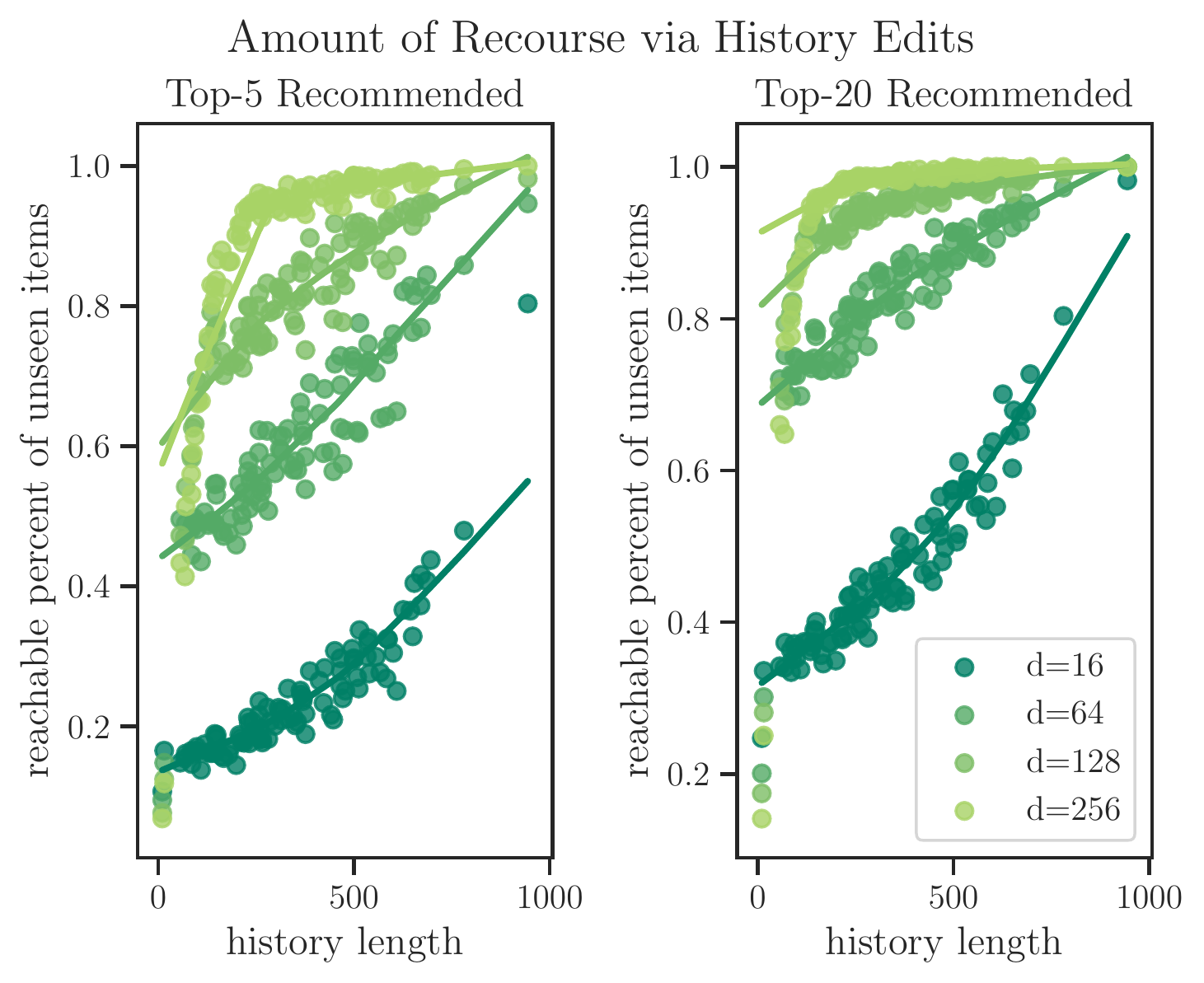}
\caption{The proportion of unseen items reachable by users varies with their history length.  A LOESS regressed curve illustrates the trend.}\label{fig:lf_user_hist_reachable}
\end{figure}

\begin{figure}
\center
\includegraphics[width=\fullsize\figwidthbase]{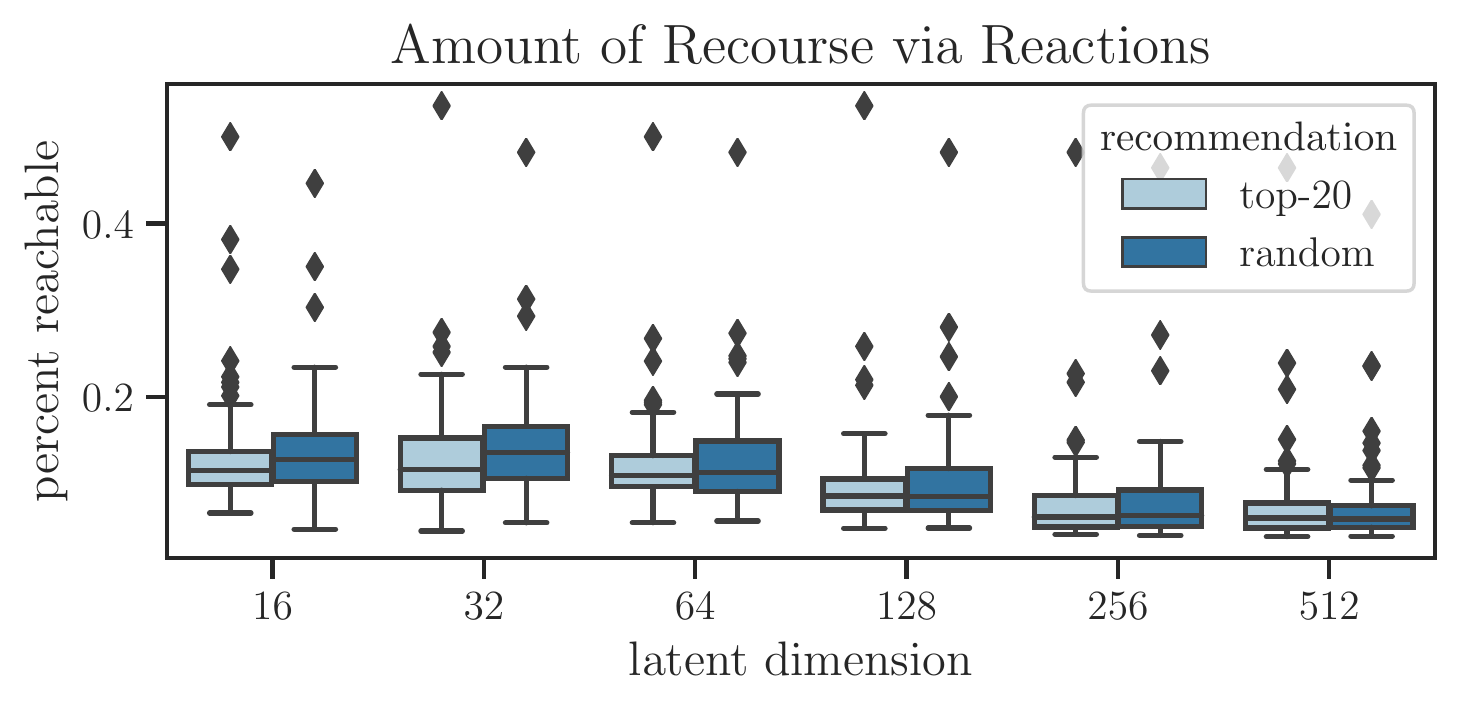}
\includegraphics[width=\fullsize\figwidthbase]{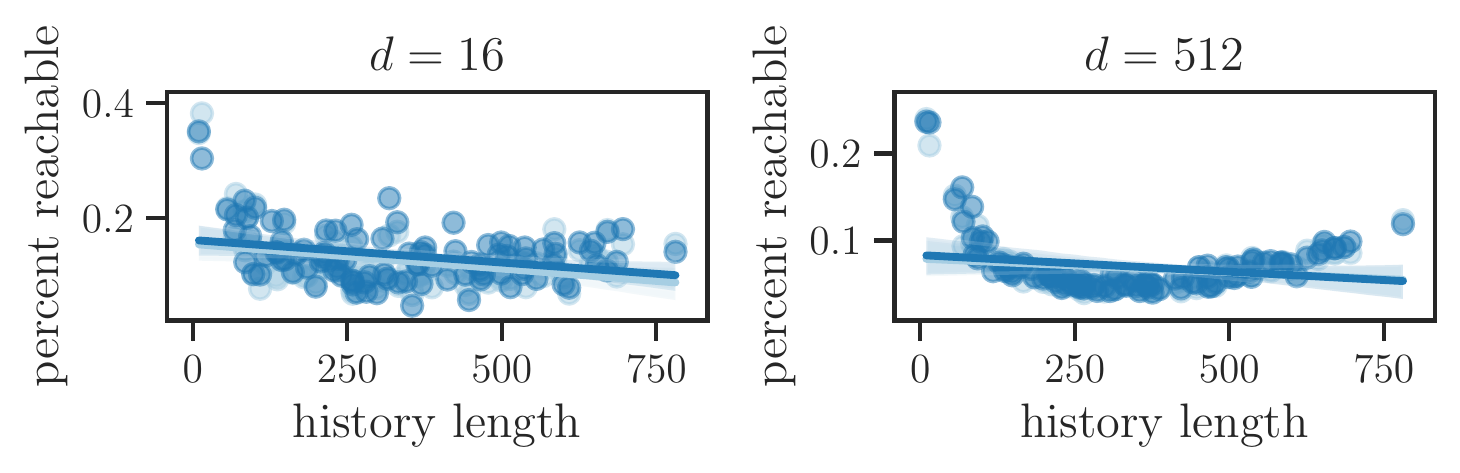}
\caption{When actions are constrained to reaction to a set of items, lower complexity models provide higher reachability. A random set of items provides slightly more recourse to users than if the set is selected based on predicted user preferences. Furthermore, there is a slight trend that users with smaller history lengths have more available recourse. }\label{fig:lf_user_next_reachable}
\end{figure}
 \fi
\end{document}